\newcommand{\hiddensection}[1]{
    \addtocontents{toc}{\protect\setcounter{tocdepth}{0}}
    \section{#1}
    \addtocontents{toc}{\protect\setcounter{tocdepth}{6}}
}
\renewcommand{\paragraph}[1]{{\textbf{#1}}}
\definecolor{darkblue} {rgb} {0.0 , 0.0 , 0.65}
\definecolor{darkred}  {rgb} {0.80, 0.0 , 0.0 }
\definecolor{darkgreen}{rgb} {0.0 , 0.50, 0.0 }
\definecolor{gray75}   {gray}{0.75}
\newlength{\thmtopsep}\setlength{\thmtopsep}{1.5\parskip}
\newlength{\thmbotsep}\setlength{\thmbotsep}{0\parskip}
\newtheoremstyle{theoremstyle}
    {\thmtopsep}{\thmbotsep}
    {}           
    {}           
    {\bfseries}  
    {.}          
    {.5em}       
    {}           
\theoremstyle{theoremstyle}
\newtheorem{definition}{Definition}[section]
\newtheorem{lemma}{Lemma}[section]
\newtheorem{model}{Model}[section]
\newtheorem{proposition}{Proposition}[section]
\crefname{corollary}{Cor}{Cors}
\Crefname{corollary}{Cor}{Cors}
\crefname{definition}{Definition}{Definitions}
\Crefname{definition}{Definition}{Definitions}
\crefname{example}{Example}{Examples}
\Crefname{example}{Example}{Examples}
\crefname{fact}{Fact}{Facts}
\Crefname{fact}{Fact}{Facts}
\crefname{lemma}{Lem}{Lems}
\Crefname{lemma}{Lem}{Lems}
\crefname{model}{Mod}{Mods}
\Crefname{model}{Mod}{Mods}
\crefname{proposition}{Prop}{Props}
\Crefname{proposition}{Prop}{Props}
\crefname{remark}{Rem}{Rems}
\Crefname{remark}{Rem}{Rems}
\crefname{section}{Sec}{Secs}
\Crefname{section}{Sec}{Secs}
\crefname{theorem}{Thm}{Thms}
\Crefname{theorem}{Thm}{Thms}
\crefname{equation}{}{}
\Crefname{equation}{}{}
\crefname{appendix}{App}{Apps}
\Crefname{appendix}{App}{Apps}
\crefname{figure}{Fig}{Figs}
\Crefname{figure}{Fig}{Figs}
\newcommand{\ie}{\textit{i.e.}}
\newcommand{\eg}{\textit{e.g.}}
\newcommand{\R}{\mathbb{R}}
\newcommand{\Var}{\mathbb{V}}
\newcommand{\E}{\mathbb{E}}
\newcommand{\N}{\mathbb{N}}
\renewcommand{\H}{\mathbb{H}}
\newcommand{\F}{\mathcal{F}}
\newcommand{\sd}{\mathrm{d}}
\renewcommand{\d}{\partial}
\newcommand{\isd}{\,\mathrm{d}}
\newcommand{\KL}{\operatorname{KL}}
\newcommand{\GP}{\mathcal{GP}}
\newcommand{\Normal}{\mathcal{N}}
\newcommand{\vardot}{\,\cdot\,}
\newcommand{\cond}{\,|\,}
\newcommand{\divsep}{\,\|\,}
\newcommand{\T}{\top}
\renewcommand{\th}{\theta}
\renewcommand{\ss}[1]{_{\text{#1}}}
\newcommand{\erf}{\operatorname{erf}}
\newcommand{\sub}{\subseteq}
\renewcommand{\P}{\mathbb{P}}
\newcommand{\tr}{\operatorname{tr}}
\newcommand{\ind}{\mathds{1}}
\newcommand{\ep}{\varepsilon}
\newcommand{\vu}{\mathbf{u}}
\newcommand{\vt}{\mathbf{t}}
\newcommand{\vy}{\mathbf{y}}
\newcommand{\vz}{\mathbf{z}}
\newcommand{\vx}{\mathbf{x}}
\newcommand{\mK}{\mathbf{K}}
\newcommand{\mM}{\mathbf{M}}
\newcommand{\mI}{\mathbf{I}}
\newcommand{\mA}{\mathbf{A}}
\newcommand{\mB}{\mathbf{B}}
\newcommand{\mX}{\mathbf{X}}
\newcommand{\mY}{\mathbf{Y}}
\newcommand{\mSigma}{\mathbf{\Sigma}}
\newcommand{\vmu}{\bm{\upmu}}
\newcommand{\vbeta}{\bm{\upbeta}}
\DeclarePairedDelimiter\parens{(}{)}             
\DeclarePairedDelimiter\sbrac{[}{]}              
\DeclarePairedDelimiter\set{\{}{\}}              
\DeclarePairedDelimiter\lra{\langle}{\rangle}    
\DeclarePairedDelimiter\floor{\lfloor}{\rfloor}  
\DeclarePairedDelimiter\norm{\|}{\|}             
\DeclarePairedDelimiter\abs{|}{|}                
\begin{document}

\twocolumn[

\aistatstitle{Modelling Non-Smooth Signals with Complex Spectral Structure}

\aistatsauthor{
    Wessel P.\ Bruinsma \And Martin Tegn\'er* \And Richard E.\ Turner
}
\runningauthor{
    Wessel P.\ Bruinsma, Martin Tegn\'er, and Richard E.\ Turner
}

\aistatsaddress{
    University of Cambridge \\
    Invenia Labs \\
    \texttt{wpb23@cam.ac.uk}
    \And
    University of Oxford \\
    Oxford-Man Institute \\
    \texttt{mt@robots.ox.ac.uk}
    \And
    University of Cambridge \\
    ~\\
    \texttt{ret26@cam.ac.uk}
}

]

\begin{abstract}
    The Gaussian Process Convolution Model \citep[GPCM;][]{Tobar:2015:Learning_Stationary} is a model for signals with complex spectral structure.
    A significant limitation of the GPCM is that it assumes a rapidly decaying spectrum: it can only model smooth signals.
    Moreover, inference in the GPCM currently requires (1) a mean-field assumption, resulting in poorly calibrated uncertainties, and (2) a tedious variational optimisation of large covariance matrices.
    We redesign the GPCM model to induce a richer distribution over the spectrum with relaxed assumptions about smoothness: the Causal Gaussian Process Convolution Model (CGPCM) introduces a causality assumption into the GPCM, and the Rough Gaussian Process  Convolution Model (RGPCM) can be interpreted as a Bayesian nonparametric generalisation of the fractional Ornstein--Uhlenbeck process.
    We also propose a more effective variational inference scheme, going beyond the mean-field assumption: we design a Gibbs sampler which directly samples from the optimal variational solution, circumventing any variational optimisation entirely.
    The proposed variations of the GPCM are validated in experiments on synthetic and real-world data, showing promising results.
    \vspace{-0.5em}
\end{abstract}

\hiddensection{INTRODUCTION}

Gaussian processes (GPs) form a popular and powerful probabilistic framework for modelling functions \citep{Rasmussen:2006:Gaussian_Processes}.
They are successfully applied in a wide variety of contexts and are state of the art in numerous regression tasks \citep{Bui:2016:Deep_Gaussian_Processes_for_Regression}.
Gaussian processes are nonparametric models that grow in complexity as more data is observed, which makes them robust against overfitting.
They achieve this automatic calibration of complexity by posing a prior distribution directly over the underlying function $f\colon \mathcal{T} \to \R$.
In particular, the defining property of a Gaussian process is
that any finite collection of function values $f(t_1), \ldots, f(t_n)$ is multivariate Gaussian distributed.

The key modelling decision when using Gaussian processes is the choice of covariance function $k(t, t') = \operatorname{cov}(f(t), f(t'))$, also called the \emph{kernel}.
The kernel encodes prior information about the underlying function $f$.
For example, the kernel specifies the smoothness of $f$ and the typical length scale on which $f$ varies.
A kernel is \emph{stationary} if it only depends on the difference of its arguments: $k(t, t') = k(t - t')$.
In that case, if the data is translated, the predictions are translated accordingly, a symmetry called \emph{translation equivariance} which is often desirable.
A stationary kernel is characterised by its Fourier transform---a fact known as Bochner's theorem---where the Fourier transform is called the  \emph{power spectral density} (PSD) or simply spectrum.
If $f$ is decomposed into complex exponentials with random amplitudes, then the spectrum tells us how the variances of these random amplitudes vary with frequency.

A popular choice for the kernel is the \emph{exponentiated quadratic} (EQ) kernel:
$
    k(t, t') = \exp\parens*{-\frac{1}{2\ell^2}\|t - t'\|^2}.
$
The EQ kernel assumes that $f$ is infinitely differentiable and varies on only a single length scale $\ell$.
Whilst appropriate for many tasks, these assumptions are too rigid for harder regression problems, which require more expressive kernels.
From the perspective of the spectrum, the EQ kernel assumes that the spectrum of $f$ is necessarily of the form $\operatorname{PSD}(\omega) = c_1 e^{-c_2 \omega^2}$.
This is restrictive, because real-world signals often have much richer spectral structure.

An important property of the spectrum is the behaviour at high frequencies. Specifically, the asymptotic decay of $\operatorname{PSD}(\omega)$ is intimately connected to the regularity of sample paths of $f$.
For example, sample paths of $f$ are $n$-times differentiable if and only if the $2n$\textsuperscript{th} spectral moment is finite, $\int \omega^{2n}\text{PSD}(\omega)\isd \omega<\infty$ \citep[Theorem 4,][]{Cambanis:1973:On_Some_Continuity_and_Differentiability}.
Since the PSD of the EQ kernel $\text{PSD}(\omega) = c_1 \smash{e^{-c_2 \omega^2}}$ decays quicker than any polynomial, sample paths of a Gaussian process with an EQ kernel are infinitely differentiable.

Developing flexible and expressive kernels as well as choosing the right kernel for a particular task is an active area of research.
Many approaches let the kernel be of a flexible parametric form \citep{Wilson:2013:Spectral_Mixture,Calandra:2016:Manifold_Gaussian_Processes_for_Regression,Sun:2018:Differentiable_Compositional_Kernel_Learning_for} or search over a large space of kernels \citep{,Grosse:2012:Exploiting_Compositionality_to_Explore,Duvenaud:2014:Automatic_Construction,Malkomes:2016:Bayesian_Optimization_for_Automated_Model}.
These approaches are often effective, but run the risk of overfitting due to the large number of parameters they introduce.
Morever, posing a flexible parametric form for the kernel or spectrum typically presents an optimisation problem which is riddled with local optima.
\begin{figure}[t]
    \centering
    \vspace{.5em}
    \includegraphics[trim={19cm 0 0 0}, clip, width=\linewidth]{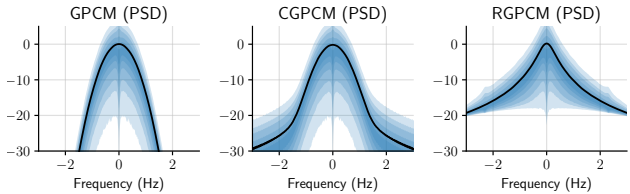}
    \vspace*{-1.6em}
    \caption{
        Visualisation of the nonparametric prior over the power spectral density by the GPCM (left) and the two variants of the GPCM introduced in this paper (middle and right).
        Observe that the GPCM has a quickly decaying spectrum, whereas the CGPCM and especially the RGPCM have support at higher frequencies.
    }
    \vspace{-1.25em}
    \label{fig:priors_intro}
\end{figure}
Other approaches treat the kernel as a latent function by assuming a prior distribution over the kernel  \citep{Tobar:2015:Learning_Stationary,Oliva:2016:Bayesian_Nonparametric_Kernel-Learning,Jang:2017:Scalable_Levy_Process_Priors_for}. 
This induces a prior distribution over the PSD---see Figure \ref{fig:priors_intro} for an illustration---which is appealing, 
because it brings the benefits of Bayesian nonparametrics to the spectrum:
as more data are observed, more spectral structure of the data is revealed, and the posterior over the spectrum automatically increases in complexity.
Inference in these models, however, is considerably more involved and computationally demanding.

An example of a model that treats the kernel as a random function is the Gaussian Process Convolution Model (GPCM) \citep{Tobar:2015:Learning_Stationary}, which is the focus of this paper.
The GPCM parametrises the kernel of the Gaussian process $f$ with a
sample of \emph{another} Gaussian process $h$,
where in turn $h$ has a modified EQ kernel.
\citet{Tobar:2015:Inter-Domain_Inducing} construct the GPCM by considering a linear system excited by white noise;
this construction will be central in this paper.

Although the GPCM works well on a range of tasks, a consequence of the form of the kernel for $h$ is that $f$ has a rapidly decaying spectrum (see \cref{fig:priors_intro}), which means that $f$ will be a smooth function. If data are not (noisy) observations of a smooth function, the GPCM can fail to capture important structure and lead to predictions which are too smooth and consequently overshoot the data. 
In addition, the existing inference procedure by \citet{Tobar:2015:Learning_Stationary} relies on a mean-field assumption and is computationally expensive because it requires numerical optimisation over high-dimensional covariance matrices.

The purpose of this paper is twofold: to redesign the GPCM for non-smooth signals and to improve inference in terms of both approximation quality and computational expense.
Our contributions are as follows.
First, we propose two variations of the GPCM which induce a richer distribution over the spectrum with relaxed assumptions about smoothness: the Causal Gaussian Process Convolution Model (CGPCM) introduces a causality assumption into the GPCM, and the Rough Gaussian Process Convolution Model (RGPCM) can be interpreted as a non-parametric generalisation of the fractional Ornstein--Uhlenbeck process.
Second, we propose an improved variational inference scheme which goes beyond the mean-field assumption. In particular, we design a Gibbs sampler which directly samples from the optimal variational solution, which entirely circumvents any variational optimisation;
the Gibbs sampler is found to mix quickly and give uncertainty estimates superior to approaches that apply explicit optimisation. 
Finally, we validate the proposed variations of the GPCM and inference scheme in experiments on synthetic and real-world data.
\vspace{-0.25em}

\hiddensection{GAUSSIAN PROCESS CONVOLUTION MODELS}
\label{sec:gpcms}

The GPCM admits two equivalent formulations.
The first formulation of the GPCM is a linear system excited by white noise with a nonparametric prior over the filter \citep{Tobar:2015:Learning_Stationary}. 
This formulation is useful because it shows how the GPCM is constructed and hence how it can be modified to adjust properties.
It also forms the basis for an approximate inference scheme.
The second formulation of the GPCM is as a GP with a nonparametric prior over the kernel, which is the interpretation that we are ultimately after.

Let $k_h$ be a kernel with finite trace, meaning that
$
    \int_{-\infty}^\infty k_h(\tau, \tau) \isd{\tau} < \infty.
$\footnote{The kernel $k_h$ is then said to be a trace class Hilbert--Schmidt 
kernel \citep{Lax:2002:Functional_Analysis}.}
Then the linear system formulation of the GPCM is given by the following model:
\begin{gather}
    x \sim \GP(0, \delta(t - t')), \quad
    h \sim \GP(0, k_h(t, t')),  \\
    \textstyle f(t)\cond h, x  = \int_{-\infty}^\infty h(t - \tau) x(\tau) \isd \tau, \hspace{1.5em} \label{eq:gpcm_conv}
\end{gather}
where $\delta(\vardot)$ denotes the Dirac delta function.
This model is accompanied by the data likelihood 
$
    y\cond f\sim \GP(f(t), \sigma^2 \delta[t - t'])
$, where $\delta[\vardot]$ is the Kronecker\footnote{
    $\delta[0] = 1$ and $\delta[\vardot] = 0$ elsewhere.
} delta function.
From the requirement that $k_h$ has a finite trace it follows that $f$ has finite power:
$
    \Var[f(t)]
    = \smash{\int_{-\infty}^\infty} k_h(\tau, \tau) \isd{\tau} < \infty
$.
One family of kernels with finite trace is given by the \textit{amplitude-modulated, locally stationary} (AMLS) kernels \citep{Chen:2018:On_Kernel_Design_for_Regularized}
$
    k_h(t, t') = w(t) w(t') k_g(t - t')
$
where $w$ is square integrable and $k_g$ a stationary kernel.
Indeed, then
$
    \Var[f(t)]
    = k_g(0)\int_0 w^2(\tau) \isd{\tau} < \infty.
$
Choosing $k_h$ to be an AMLS kernel corresponds to a generative model where we first draw a filter $g$ and then apply a window function $w$ to obtain $h$: $g \sim \GP(0, k_g)$ and  $h(t)\cond g = w(t) g(t)$.
The window $w$ also serves to make inference well posed:
since $x$ is stationary, 
any shifted version of $h$ results in an identical model for $f\cond h$, but versions of $h$ contained within the window are preferred.
Following \citet{Tobar:2015:Learning_Stationary}, we choose $w(t) = e^{-\alpha t^2}$ and  $k_g(t - t') = e^{-\gamma(t - t')^2}$.
With these choices, $k_h(t,t') = e^{-\alpha t^2 - \alpha t^{\prime2} - \gamma(t-t')^2}$, which \citeauthor{Tobar:2015:Learning_Stationary} name the \textit{decaying exponentiated quadratic} (DEQ) kernel.
For the DEQ kernel, $\alpha$ determines the temporal extent of the filter $h$ and $\gamma$ the time scale on which the filter $h$ varies.

When conditioned on $h$, $f$ is a fixed linear transform ($h$ is fixed) of the Gaussian process $x$. Consequently, $f\cond h$ is also a Gaussian process, with zero mean, $\E[f(t) | h] = 0$, and covariance $k_{f \cond h}(t, t') = \E[f(t)f(t') | h]$.
This reveals an equivalent formulation of the GPCM with a nonparametric prior over the kernel:

\begin{model}[GPCM]
    \label{mod:gpcm_nonparametric_kernel}
    Let $k_h$ be a DEQ kernel.
    Then the GPCM is given by the following generative model:
    \begin{align}
        h &\sim \GP(0, k_h(t, t')), \\
        f\cond h &\sim \textstyle \GP(
            0,
            \int_{-\infty}^\infty h((t  - t') + \tau)h(\tau) \isd \tau
        ). \label{eq:gpcm_sampling}
    \end{align}
\end{model}
\vspace{-0.5em}

Observe that, in \cref{mod:gpcm_nonparametric_kernel}, the parametrisation of the kernel $k_{f \cond h}$ of $f$ is precisely the functional analogue of the parametrisation of a covariance matrix  with the outer product: $\mSigma = \mA \mA^\T$.
As alluded to in the introduction, a consequence of the strong smoothness assumptions on $h$, which derive from the DEQ kernel, is that the GPCM also exhibits smoothness:

\begin{proposition} \label{prop:smoothness_gpcm}
    Sample paths of the GPCM are almost surely everywhere differentiable.
    See \cref{app:proof_gpcm}.
\end{proposition}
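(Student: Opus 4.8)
The plan is to use the nonparametric-kernel formulation of \cref{mod:gpcm_nonparametric_kernel} together with the spectral criterion for sample-path differentiability quoted in the introduction. Conditioned on $h$, the process $f$ is stationary with covariance $k_{f\cond h}(\tau) = \int_{-\infty}^\infty h(\tau+s)h(s)\isd s$, which is the autocorrelation of $h$; hence, by Wiener--Khinchin, its power spectral density is $S_h(\omega) = \abs{\hat h(\omega)}^2$, where $\hat h$ is the Fourier transform of $h$. By \citet[Theorem 4]{Cambanis:1973:On_Some_Continuity_and_Differentiability}, it then suffices to show that for almost every sample path $h$ the second spectral moment $M_2(h) = \int_{-\infty}^\infty \omega^2 \abs{\hat h(\omega)}^2 \isd\omega$ is finite: for each such $h$ the sample paths of $f\cond h$ are almost surely differentiable, and the tower property over the joint law of $(h,f)$ then upgrades this to almost-sure differentiability of sample paths of $f$.

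To bound $M_2(h)$, recall that $h = w\,g$ with window $w(t) = e^{-\alpha t^2}$ and $g \sim \GP(0, k_g)$ for the EQ kernel $k_g$. Since $k_h$ is smooth, $h$ is almost surely continuously differentiable, and since $\int_{-\infty}^\infty \E[h'(t)^2]\isd t < \infty$ would give $h' \in L^2(\R)$ almost surely, Parseval's identity $M_2(h) = 2\pi\norm{h'}_{L^2}^2$ (using $\widehat{h'}(\omega) = i\omega\hat h(\omega)$) reduces the claim to showing $\int_{-\infty}^\infty \E[h'(t)^2]\isd t < \infty$. Now $\E[h'(t)^2] = [\partial_t\partial_{t'}k_h(t,t')]_{t'=t}$, and differentiating the DEQ kernel $k_h(t,t') = e^{-\alpha t^2 - \alpha t'^2 - \gamma(t-t')^2}$ twice and restricting to the diagonal gives $\E[h'(t)^2] = (4\alpha^2 t^2 + 2\gamma)\,e^{-2\alpha t^2}$, which is integrable over $\R$ because $\alpha > 0$. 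This closes the argument.

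The main obstacle is bookkeeping rather than ideas: one must be careful about what ``the PSD'' and ``the spectral moment'' mean when $h$ is itself random, justify interchanging $\E$ with $\isd\omega$-integration in the Parseval step (Tonelli, by nonnegativity), and appeal to \citeauthor{Cambanis:1973:On_Some_Continuity_and_Differentiability}'s theorem to pass from a finite spectral moment to almost-sure sample-path differentiability. I note that the argument in fact gives more: because $w$ and all its derivatives decay like a Gaussian while sample paths of $g$ and all of its derivatives grow only sub-polynomially (each $g^{(n)}$ being a stationary Gaussian process), one has $h \in \mathcal{S}(\R)$ almost surely, so $\abs{\hat h}^2$ decays faster than any polynomial and every spectral moment is finite; \cref{prop:smoothness_gpcm} is the weakest such consequence, stated because it is the property we contrast with the non-smooth behaviour of the CGPCM and RGPCM.
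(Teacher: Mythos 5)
Your argument is correct, but it takes a genuinely different route from the paper. The paper works directly with the covariance $k_{f\cond h}$: it Taylor-expands $k_{f\cond h}(r)$ around $r=0$ and invokes the local criterion (Theorem 3 of \citealp{Cambanis:1973:On_Some_Continuity_and_Differentiability}) that $k_{f\cond h}'(0)=0$ and $k_{f\cond h}''(0)\neq 0$ suffice, establishing these by integration by parts; the boundary terms vanish thanks to a dedicated appendix (\cref{app:well-behavedness-filter}) proving, via a Borel--Cantelli argument, that $h$ and all its derivatives decay almost surely. You instead condition on $h$, identify the PSD of the stationary process $f\cond h$ as $\abs{\hat h}^2$, and verify the spectral-moment criterion (the Theorem 4 statement quoted in the introduction) by converting $\int\omega^2\abs{\hat h(\omega)}^2\isd\omega$ to $\norm{h'}_{L^2}^2$ via Parseval and bounding its expectation by $\int[\d_t\d_{t'}k_h(t,t')]_{t'=t}\isd t<\infty$, which your DEQ computation $(4\alpha^2t^2+2\gamma)e^{-2\alpha t^2}$ gets right. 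What each approach buys: your Tonelli-in-expectation trick sidesteps the pathwise decay machinery of \cref{app:well-behavedness-filter} almost entirely (you only need $h$ a.s.\ absolutely continuous with $h,h'\in L^2$ to justify $\widehat{h'}(\omega)=i\omega\hat h(\omega)$, and $h\in L^2$ follows from the finite trace by the same Tonelli argument), whereas the paper's covariance-local route is the one that extends to the CGPCM (\cref{prop:smoothness}), where the sign of $z'(0)=-\tfrac12 h^2(0)$ — not a spectral moment — is what distinguishes the differentiable and nowhere-differentiable regimes, and it avoids Fourier-analytic technicalities altogether. Both proofs lean on Cambanis at the same level of rigour (you use only the sufficiency direction of the quoted equivalence, and your closing remark that $h\in\mathcal{S}(\R)$ a.s.\ makes every spectral moment finite gives you ample slack there); the only points you should make explicit are that the pathwise derivative of $h$ agrees with its mean-square derivative, so that $\E[h'(t)^2]$ is indeed the diagonal of $\d_t\d_{t'}k_h$, and the Tonelli/tower-property bookkeeping you already flag.
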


\paragraph{The Causal GPCM.}
The linear system formulation of the GPCM in \eqref{eq:gpcm_conv} is an \emph{acausal} system, meaning that past system responses can depend on future inputs.\footnote{
    Although similarly named, the system-theoretic notion of causality here is distinct from the probabilistic notion of causality.
}
Acausality combined with the smoothness of $h$ lies at the heart of the smoothness of the GPCM.
Therefore, to build a model which is less smooth, we adjust the convolution in \eqref{eq:gpcm_conv} to be \emph{causal}.
In a causal system, a system response can only depend on past inputs, not  future inputs, which is in line with physical systems.
Following the construction of the GPCM  gives rise to the \emph{Causal GPCM} (see \cref{app:cgpcm}):

\begin{model}[CGPCM]
    \label{mod:cgpcm_nonparametric_kernel}
    Let $k_h$ be a DEQ kernel.
    The CGPCM is given by the following generative model:
    \begin{align}
        h &\sim \GP(0, k_h(t, t')), \\
        f\cond h &\sim \textstyle \GP(
            0,\int_0^\infty h(|t  - t'| + \tau)h(\tau) \isd \tau
        ). \label{eq:cgpcm_sampling}
    \end{align}
\end{model}
\vspace{-0.5em}

The only difference between \cref{mod:cgpcm_nonparametric_kernel} and \cref{mod:gpcm_nonparametric_kernel} is that the integral starts at zero and depends on $|t - t'|$ rather than on $t - t'$.
As the next proposition shows, this seemingly minor detail has major consequences for the smoothness properties of the CGPCM.

\begin{proposition} \label{prop:smoothness}
    If $h(0) = 0$, then sample paths of the CGPCM are almost surely everywhere differentiable.
    If, on the other hand, $h(0) \neq 0$, then sample paths of the CGPCM are almost surely nowhere differentiable.
    See \cref{app:proof_cgpcm} for a proof.
\end{proposition}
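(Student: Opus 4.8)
The plan is to argue conditionally on a fixed realisation of the filter $h$, which is almost surely a $C^\infty$ function that decays like a Gaussian at infinity together with all its derivatives (since $h$ has a DEQ kernel); in particular, almost surely, $h(s)\to 0$ as $s\to\infty$ and $h,h',h''$ lie in $L^1(0,\infty)\cap L^2(0,\infty)$. Conditionally on such an $h$, the process $f$ of \cref{mod:cgpcm_nonparametric_kernel} is a stationary zero-mean Gaussian process whose covariance $k_{f\cond h}(\tau)=\int_0^\infty h(|\tau|+s)\,h(s)\isd s$ is precisely the autocorrelation function of the truncated filter $h_+:=h\,\ind_{[0,\infty)}$, so its power spectral density is $\mathrm{PSD}(\omega)=\lvert\widehat{h_+}(\omega)\rvert^2$. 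By \citet[Theorem 4]{Cambanis:1973:On_Some_Continuity_and_Differentiability}, sample paths of $f\cond h$ are almost surely everywhere differentiable if and only if the second spectral moment $\int_{\R}\omega^2\lvert\widehat{h_+}(\omega)\rvert^2\isd\omega$ is finite, equivalently (using $h_+\in L^2(\R)$) if and only if $h_+\in H^1(\R)$; and when this moment is infinite the paths are almost surely nowhere differentiable, by the zero--one dichotomy for the path regularity of stationary Gaussian processes. Everything therefore reduces to deciding when $h_+\in H^1(\R)$.

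Because $h$ is smooth and vanishes at $+\infty$, the distributional derivative of $h_+$ equals $h'\,\ind_{[0,\infty)}+h(0)\,\delta_0$: the first summand lies in $L^2(\R)$ almost surely, whereas the Dirac term $h(0)\,\delta_0$ lies in $L^2(\R)$ only if it vanishes. Hence $h_+\in H^1(\R)$ iff $h(0)=0$. The same split is visible directly at the level of the covariance: differentiating under the integral, $\left.\tfrac{\sd}{\sd\tau}k_{f\cond h}(\tau)\right|_{\tau=0^+}=\int_0^\infty h'(s)h(s)\isd s=\tfrac12\bigl[h(s)^2\bigr]_{s=0}^{\infty}=-\tfrac12 h(0)^2$, so $k_{f\cond h}$ has a one-sided kink at the origin---of the same Ornstein--Uhlenbeck type as $e^{-|\tau|}$---exactly when $h(0)\neq 0$; when $h(0)=0$ a further differentiation under the integral yields the finite second derivative $k_{f\cond h}''(0)=\int_0^\infty h''(s)h(s)\isd s$, so the covariance is $C^2$ at $0$.

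Combining the two steps: if $h(0)=0$ then $h_+\in H^1(\R)$, the second spectral moment is finite, and the sample paths of $f\cond h$ are almost surely everywhere differentiable; if $h(0)\neq 0$ then $h_+\notin H^1(\R)$, the second spectral moment is infinite, and the sample paths are almost surely nowhere differentiable. These are exactly the two conditional statements of the proposition; note in passing that $h(0)\sim\Normal(0,1)$ under the CGPCM prior, so the second case occurs with probability one. I expect the main obstacle to be the ``nowhere differentiable'' conclusion when $h(0)\neq 0$: the spectral-moment criterion by itself only rules out $C^1$ sample paths, and strengthening this to nowhere-differentiability requires either the Belyaev-type zero--one law for stationary Gaussian processes or a direct Borel--Cantelli argument exploiting that $\E[(f(t+\tau)-f(t))^2\cond h]=2\bigl(k_{f\cond h}(0)-k_{f\cond h}(|\tau|)\bigr)=h(0)^2\lvert\tau\rvert+o(\lvert\tau\rvert)$, which forces the difference quotients at any fixed $t$ to have variances diverging like $\lvert\tau\rvert^{-1}$ along, say, $\tau=2^{-n}$.
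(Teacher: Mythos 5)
Your proposal is correct and, conditional on $h$, rests on the same two pillars as the paper's proof in \cref{app:proof_cgpcm}: the almost-sure smoothness and decay of the DEQ filter, and the dichotomy theorems of \citet{Cambanis:1973:On_Some_Continuity_and_Differentiability} for stationary Gaussian processes. What you do differently is how you detect the critical case: the paper expands $k_{f\cond h}(r) = z(|r|)$ at the origin and shows by integration by parts that the linear coefficient is $z'(0) = -\tfrac12 h^2(0)$ (and, when $h(0)=0$, that $z''(0) = -\int_0^\infty (h'(\tau))^2 \isd\tau < 0$), so Cambanis's covariance-based criteria apply directly; you instead pass to the spectral domain, observe that $k_{f\cond h}$ is the autocorrelation of the truncated filter $h_+ = h\,\ind_{[0,\infty)}$ so that $\mathrm{PSD}(\omega) = |\widehat{h_+}(\omega)|^2$, and characterise finiteness of the second spectral moment as $h_+\in H^1(\R)$, which fails precisely when the distributional derivative $h_+' = h'\,\ind_{[0,\infty)} + h(0)\,\delta_0$ carries a Dirac atom, i.e.\ iff $h(0)\neq 0$. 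The two detections are equivalent---your cross-check $\tfrac{\sd}{\sd r}k_{f\cond h}(0^+) = -\tfrac12 h(0)^2$ is literally the paper's integration-by-parts computation---but the Sobolev/spectral phrasing buys a cleaner interpretation (the jump of the causal filter at $0$ produces an $\omega^{-2}$, Ornstein--Uhlenbeck-like tail in the PSD, in line with the paper's roughness narrative) and sidesteps the separate check that $z''(0)\neq 0$, while the paper's time-domain route is more elementary and plugs directly into the hypotheses of Cambanis's Theorems 3 and 4 as stated in terms of the covariance at the origin. One caveat: your fallback Borel--Cantelli sketch (variances of difference quotients diverging along $\tau = 2^{-n}$) only yields almost-sure non-differentiability at each \emph{fixed} $t$; the \emph{nowhere}-differentiability claim genuinely requires the Belyaev/Cambanis-type dichotomy you name first, which is exactly what the paper invokes, so this is a matter of citation rather than a gap in your argument.
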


\begin{figure*}[t]
    \centering
    \includegraphics[width=\textwidth]{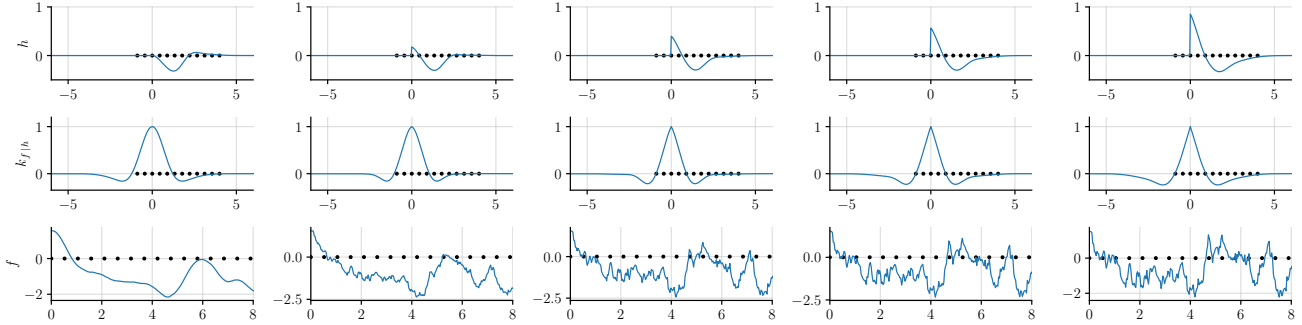}%
    \vspace{-0.35em}%
    \caption{
        Generative process of the CGPCM.
        Shows the filter $h$, the kernel $k_{f\cond h}$, and a sample $f\cond h \sim \GP(0,k_{f\cond h}(t-t'))$ while the filter is interpolated from one that satisfies $h(0)=0$ to one that satisfies $|h(0)|>0$.
        The sample appears smooth for $h(0)=0$ and becomes more irregular as $|h(0)|$ increases.
        The black dots indicate inducing point positions.
    }
    \vspace{-1em}
    \label{fig:interpolation}
\end{figure*}

Although \cref{prop:smoothness} tells us that sample paths of the CGPCM are almost surely nowhere differentiable in the case $\abs{h(0)} > 0$, the proposition does not give us a sense of how volatile the sample paths then are.
\Cref{app:cgpcm} argues $f$ that can locally be approximated by a $\abs{h(0)}$-scaled Brownian motion, which means that the magnitude of the irregular increments is controlled by the value of $\abs{h(0)}$.
Intuitively, $\abs{h(0)}$ is the magnitude of the filter when new white noise enters it: if $\abs{h(0)} > 0$, new noise is directly passed to the output, which results in a nondifferentiable signal; and the larger $\abs{h(0)}$ is, the more noisy the output will be.
\Cref{fig:interpolation} demonstrates this mechanism.
Since $h$ is modelled randomly, by performing inference in the CGPCM, the model is able to automatically infer a level of irregularity, \ie~a value for $\abs{h(0)}$, which is appropriate for the data.

\paragraph{The Rough GPCM.}
The CGPCM is able to model nondifferentiable phenomena.
This is visualised by samples in the bottom row of \cref{fig:interpolation}, which look fairly jagged. Some applications, however, may require samples which behave even more erratically, like equilibrium systems under noise in the natural sciences, and certain financial time series.
To this end, we modify both the filter $h$ and input $x$ of the (C)GPCM: we relax the smoothness of the filter $h$ for greater spectral flexibility, and posit a Mat\'ern--$\tfrac12$ kernel for the input process $x$. As will be explored in the next section, instead of using a smoothing transformation for inducing points for $x$, which the (C)GPCM use, 
our construction  will enable efficient inference of spectral content through  variational Fourier features.
The RGPCM can be interpreted as altering a Mat\'ern--$\tfrac12$ GP---also known as an Ornstein--Uhlenbeck (OU) process---by a random nonparametric modulation of the spectrum. A parametric special case of this model is the fractional Ornstein--Uhlenbeck process \citep{cheridito2003fractionalOU}, which can be \emph{rough} in the sense that it is more irregular than the OU process.\footnote{Formally, the OU process is Hölder continuous of  order $a< \smash{\tfrac12}$ while the fractional OU has $a<H$ for $H\in(0,1)$. If $H<\smash{\tfrac12}$, then it is called rough;
see \cite{gatheral2018volatility} and \cite{bennedsen2016decoupling} for empirical evidence of roughness in financial data.} We therefore call this version the \emph{Rough GPCM}.

\begin{model}[RGPCM]
    \label{mod:rgpcm_nonparametric_kernel}
    Let $h$ be white noise windowed by $w(t) = e^{-\alpha|t|}$ and 
    $k_x(t,t') = e^{-\lambda|t-t'|}$.
    Then the RGPCM is given by the following model:
    \begin{equation}
        h \sim \GP(0, k_h(t, t')),\;\;
        f\cond h \sim \GP\left(
            0, k_{f|h}(t-t')
        \right) 
    \end{equation} with
    $
    \textstyle
    k_{f\cond h}(r) \!=\! \int_0^\infty\!\!\int_0^\infty h(\tau)h(\tau') k_x(r\!-\!(\tau\!-\!\tau')) \isd\tau \isd\tau'
    $.
\end{model}

See \cref{app:rgpcm} for a more detailed description of the RGPCM.
As we will see next, the RGPCM allows for more spectral content over higher frequencies and thus more irregular sample paths.

\begin{figure*}[t]
    \centering
    \includegraphics[width=\textwidth]{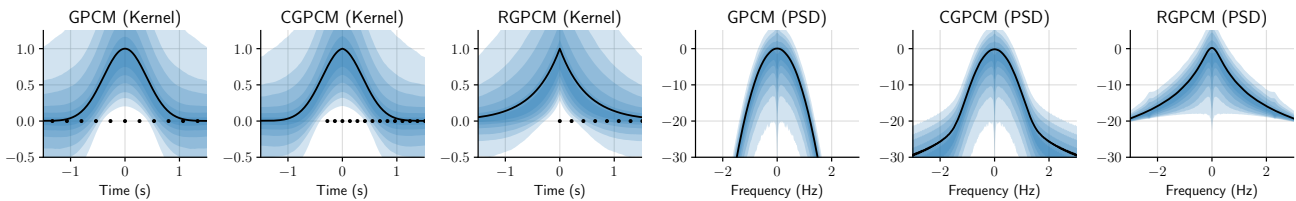}
    \vspace*{-1.5em}
    \caption{
        Visualisation of the nonparametric priors induced over kernels and PSDs by the GPCM, CGPCM, and RGPCM with $\tau_f = 0.5$ s and $\tau_w = 2$ s (see \cref{sec:inference}).
        The black line shows the mean and the shaded areas show marginal quantiles ranging from $1\%$ to $99\%$.
        The number of inducing points is $n_u = 30$; the black dots indicate inducing point positions.
    }
    \vspace{-0.5em}
    \label{fig:priors}
\end{figure*}

\paragraph{Comparison of model priors.}  
\Cref{fig:priors} visualises the nonparametric prior distribution over kernels and PSDs induced by the GPCM, CGPCM, and RGPCM.
Observe that the GPCM has a very quickly decaying spectrum, whereas the CGPCM and especially the RGPCM have substantial support at higher frequencies.
This is in line with the construction of the models: the GPCM models smooth signals, the CGPCM models signals with varying levels of irregularity, and the RGPCM models the most irregular signals.
To further support this, \cref{fig:samples} shows function, kernel, and PSD samples.
Crucially, observe that GPCM samples are smooth, the CGPCM varies in level of smoothness (\eg, the green sample is smooth and yellow one is jagged),
and the RGPCM is very irregular.

\begin{figure*}[t]
    \centering
    \includegraphics[width=\textwidth]{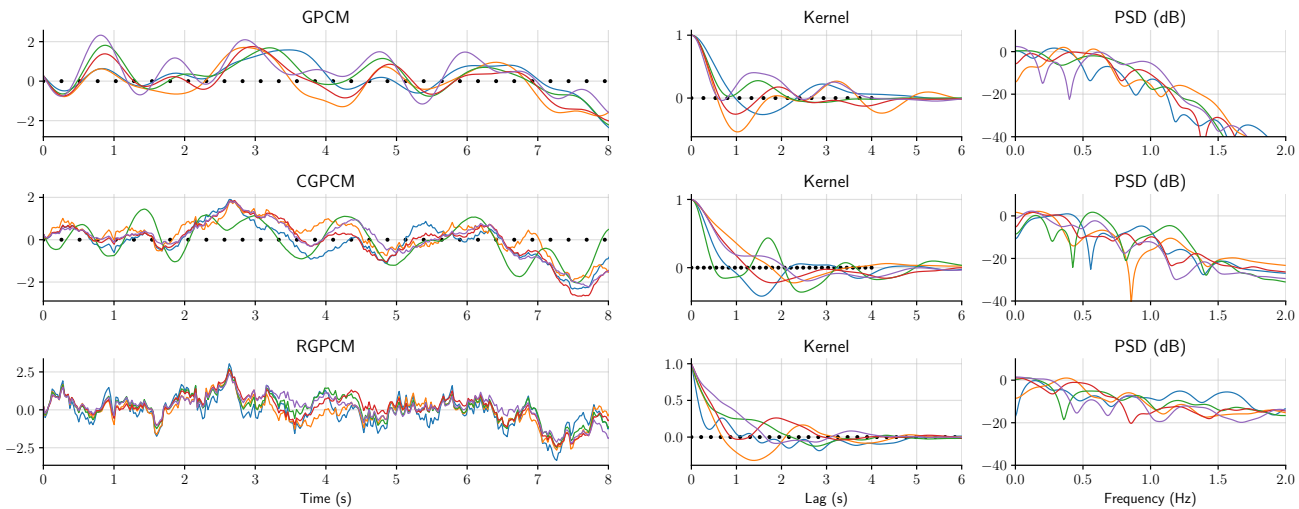}
    \vspace*{-1.5em}
    \caption{
        Prior function, kernel, and PSD samples from the GPCM, CGPCM, and RGPCM with $\tau_f = 0.5$ s and $\tau_w = 2$ s (see \cref{sec:inference}).
        The numbers of inducing points are $n_u = 30$ and $n_z = 40$;
        black dots indicate inducing point positions.
    }
    \vspace{-1em}
    \label{fig:samples}
\end{figure*}

\paragraph{Choice of hyperparameters.}
To fairly compare the GPCM, CGPCM, and RGPCM in experiments, we need to able to configure the models comparably.
By requiring the models' prior powers to be unity and that, for an appropriate definition of the length scale, the prior marginal covariance functions $\E[k_{f\cond h}(t, t')]$ have equal length scales, \cref{app:init} derives the following initialisation.
For some data, let $\tau_f$ be the smallest length scale contained in the signal and let $\tau_w$ be the desired extent of the filter.
Let the subscript $\vardot\ss{c}$ refer to the CGPCM, $\vardot\ss{ac}$ to the GPCM, and $\vardot\ss{r}$ to the RGPCM.
Then initialise $\alpha\ss{ac,c} = \frac{\pi}{4} \tau^{-2}_w$, $\alpha\ss{r} = \tau_w^{-1}$, $\gamma\ss{ac,c} =\frac{\pi}{4} {\tau_f^{-2}} - \frac12\alpha\ss{ac,c}$, and $\lambda\ss{r} = \tau_f^{-1}$.
By setting $\tau_w = 2 \tau_f$, $\tau_{f,\text{ac,c}} = \smash{\sqrt{\pi/2}}\ell\approx1.2 \ell$, and $\tau_{f,\text{rv}}=\ell$, we define the  \emph{standardised marginal covariance functions}:
\begin{align}
    k\ss{ac}(r) &= \exp(-\tfrac{1}{2\ell^2} r^2), \label{eq:standard-ac} \\
    k\ss{c}(r) &= (1 - \erf(\smash{\tfrac1{4\ell}}|r|)) \exp(-\tfrac{1}{2\ell^2} r^2), \label{eq:standard-c} \\
    k\ss{r}(r) &= \exp(-\tfrac1{\ell} |r|). \label{eq:standard-r} 
\end{align}
To the best of our knowledge, the kernel $k\ss{c}$ does not have an established name.
(That $k\ss{c}$  is a positive definite function follows from the fact that it is a covariance function of the CGPCM.)
We call $k\ss{c}$ the \emph{causal exponentiated quadratic} (CEQ) kernel.
The standard kernels \cref{eq:standard-ac,eq:standard-c,eq:standard-r} are helpful, because they allow us to build intuition for the GPCM models by comparing to familiar kernels:
the GPCM is like an EQ GP, the CGPCM is also like an EQ GP but with an irregular component, and the RGPCM is like a Mat\'ern--$\frac12$ GP.

\vspace{-0.25em}
\hiddensection{INFERENCE}
\label{sec:inference}

For all GPCM models, the posterior conditioned on data cannot be computed analytically, so an approximation is necessary.
We follow \citet{Tobar:2015:Learning_Stationary} and consider a variational approximation \citep{Wainwright:2008:Graphical_Models_Exponential_Families_and}.
The setup and derivation of our inference scheme is spelled out in detail in \cref{app:inference};
in this section, we  give a high-level sketch.

To approximate the posterior over the Gaussian processes $x$ and $h$, we make use of inducing points \citep{Titsias:2009:Variational_Learning, Matthews:2016:On_Sparse_Variational}.
For the GPCM and CGPCM, let $\vu$ be $n_u$ inducing points for $h$, and let $\vz$ be $n_z$ inducing points for the inter-domain transform $s(\tau) = \int_{-\infty}^\infty e^{-\omega (\tau - t)^2} x(t) \isd t$ \citep{Lazaro-Gredilla:2009:Inter-Domain_Gaussian_Processes_for_Sparse}.
Note that $s$ has an EQ kernel.
For the RGPCM, first, let $\vu$ be $n_u$ inducing points for the \emph{causal} inter-domain transform $s(\tau) = \int_{-\infty}^\tau e^{-\gamma (\tau - t)} x(t) \isd t$.
Compared to the GPCM, $s$ now has a Mat\'ern--$\frac12$ kernel.
Second, for $h$, we exploit the fact that $x$ also has a Mat\'ern--$\tfrac12$ kernel, which enables us to \emph{variational Fourier features} (VFFs) \citep{Hensman:2018:Variational_Fourier_Features_for_Gaussian}.
Whereas regular inducing points approximate the posterior with temporally local basis functions placed at the inducing points, VFFs approximate the posterior with a truncated Fourier series, which can give superior spectral approximation qualities.
Because $x$ is stationary, such an approximation is not possible with the Fourier transform as inter-domain transform;
rather, VFFs propose a clever construction which exploits the fact that harmonics are contained within the reproducing kernel Hilbert space of the Mat\'ern--$\tfrac12$ kernel.
See \citet{Hensman:2018:Variational_Fourier_Features_for_Gaussian} for more details.
For the RGPCM, we let $\vz$ be $n_z$ VFFs for $x$. 

Henceforth, let $\th$ denote all hyperparameters of a model. Given the inducing points $\vu$ and $\vz$, we follow \citeauthor{Tobar:2015:Learning_Stationary} and consider the variational approximation
$
    q_\th(h, x, \vu, \vz)
    = p_\th(h\cond \vu) p_\th(x \cond \vz) q(\vu, \vz)
$
where $q(\vz, \vu)$ is a joint variational approximation over the inducing points.
Given some data $\vy$, to optimise $q(\vz, \vu)$ and $\theta$, we optimise the \emph{evidence lower bound} (ELBO):
\begin{align}\label{eq:elbo}
    &\F_\th[q(\vu, \vz)]  \\
    &= \E_{q}[\log p_\th(\vy\cond f)]
        - \operatorname{KL}[q(\vu, \vz)\divsep p_\th(\vu)p_\th(\vz)]. \nonumber
\end{align}
As we explain in \cref{app:inference}, key to the tractability of the ELBO is the observation that $\E[\log p_\th(\vy \cond f)\cond \vu, \vz]$ is tractable and conditionally quadratic in $\vu$ and $\vz$.

\paragraph{Mean-field inference.}
The first scheme that we consider is the \emph{mean-field \emph{(MF)} approximation} $q(\vu, \vz) = q(\vu) q(\vz)$, originally considered by \citet{Tobar:2015:Learning_Stationary}.
They parametrise $q(\vu)$ and $q(\vz)$ by Gaussians with dense covariance matrices and optimise the ELBO using gradient-based optimisation.
In \cref{app:inference}, we show that, given $q(\vz)$ (resp.\ $q(\vu)$), the optimal $q^*(\vu)$ (resp.\ $q^*(\vz)$) can be computed analytically, which gives rise to a coordinate ascent (CA) scheme.
Alternatively, the optimal form $q^*(\vz)$ (resp.\ $q^*(\vu)$) can be plugged back into the ELBO to give rise a collapsed MF bound, which depends on many fewer variational parameters and hence greatly accelerates optimisation.

\paragraph{Structured inference.}
A major issue with the mean-field approximations is that it is unable to model correlations between $\vu$ and $\vz$.
It consequently biases towards overly simple models and and tends to yield poorly calibrated uncertainties \citep{MacKay:2002:Information_Theory_Learning,Turner:2011:Two_Problems_With_Variational_Expectation}.
To improve upon the mean-field approximation, we  consider a general  \emph{structured approximation} $q(\vu, \vz)$.
In \cref{app:inference}, we derive the optimal $q^*(\vu, \vz)$ and demonstrate that the conditionals $q^*(\vu \cond \vz)$ and $q^*(\vz \cond \vu)$ are Gaussians with parameters dependent on respectively $\vz$ and $\vu$.
Therefore, to sample from the optimal $q^*(\vu, \vz)$, we can iteratively sample from these conditionals in an alternating fashion.
This gives us a way to perform inference in the models without any variational optimisation, and which even enjoys computational benefits compared to the mean-field schemes (see \cref{app:inference}).
To optimise $\th$, \cref{app:inference} shows that samples from $q^*(\vu, \vz)$ can be used to approximate $\frac{\sd}{\sd\theta}\mathcal{F}_\th[q^*(\vu, \vz)]$.
Although gradients can be approximated, $\mathcal{F}_\th[q^*(\vu, \vz)]$ unfortunately cannot be estimated.
As a proxy, we propose the lower bound $\mathcal{F}_\th[q^*\ss{MF}(\vu) q^*(\vz \cond \vu)] \le \mathcal{F}_\th[q^*(\vu, \vz)]$, which can  be estimated. Here $q^*\ss{MF}(\vu)$ is the optimal MF solution.

\hiddensection{EXPERIMENTS}
\label{sec:experiments}

We provide an implementation of the GPCM, CGPCM, and RGPCM at \href{https://github.com/wesselb/gpcm}{\texttt{github.com/wesselb/gpcm}} with a user-friendly \texttt{sklearn}-style 
interface.
In this section, we
validate the proposed variants of the GPCM and inference scheme on synthetic and real-world data.
We  use mean log loss (MLL)\footnote{
    For predictions given by means and marginal variances, the mean log loss is the average negative log-pdf of the observations under those means and marginal variances.
}  as the metric to evaluate uncertainty and the root-mean-square error (RSME) as the metric to evaluate accuracy of the mean prediction.
Unlike \citet{Tobar:2015:Learning_Stationary}, in all experiments we optimise the inducing point locations and all hyperparameters.

\paragraph{Learning performance of inference schemes.}
We compare the inference schemes described in \cref{sec:inference} to the original inference scheme by \citet{Tobar:2015:Learning_Stationary}.  \Cref{fig:elbos} shows the evolution of the ELBO over wall-clock time for all inference schemes in a toy problem.
Note that the collapsed MF bound optimises quicker than the uncollapsed MF bound, but that the coordinate ascent procedure (CA) converges nearly instantaneously and reaches its convergence threshold long before the gradient-based optimisation.
Moreover, this example only uses $40$ inducing points. The benefits of the CA scheme will be further exaggerated for greater numbers of inducing points, because gradient-based optimisation will then struggle with the large covariance matrices.
Finally, observe that the structured scheme, which derives from the CA solution (see \cref{sec:inference}), further improves the ELBO and comes closest to the GP likelihood out of all inference schemes.

\begin{figure}[t]
    \centering
    \vspace{0.5em}
    \includegraphics[width=.75\linewidth]{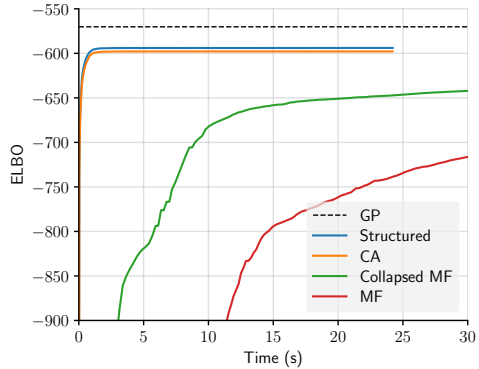}
    \vspace{-0.5em}
    \caption{
        Learning 500 noisy data points sampled from a GP with an EQ kernel.
        Shows the evolution of the ELBO in time for the original mean-field approximation by \citet{Tobar:2015:Learning_Stationary} (MF), a collapsed version of the mean-field approximation (collapsed MF), the coordinate-ascent version of the mean-field approximation (CA), and the structured ELBO with $q(\vu)$ given by the current CA solution;
        see \cref{sec:inference}.
        Also shows the likelihood of the GP from which the data was sampled.
        \texttt{scipy}'s implementation of the L-BFGS-B algorithm 
        \citep{Nocedal:2006:Numerical_Optimisation}
        was used to optimise the uncollapsed and collapsed mean-field ELBO.
        The numbers of inducing points are $n_u = 40$ and $n_z = 40$.
    }
    \vspace{-1em}
    \label{fig:elbos}
\end{figure}

\paragraph{Approximation quality of inference schemes.}
In this experiment, a reference to the mean-field approximation scheme will refer to the CA scheme, followed by optimisation of the hyperparameters with the collapsed MF ELBO, followed by another application of the CA scheme (see \cref{sec:inference}).
\Cref{fig:smk} presents the results when we use the GPCM to infer the kernel and PSD of a GP with a one-component spectral mixture kernel \citep{Wilson:2013:Spectral_Mixture} from a noisy sample.
Observe  that the  mean-field  scheme  produces a poor solution for the predictive mean and that the uncertainties are uncalibrated;
in contrast, the structured scheme is able to  capture true kernel and PSD.
We further compare the mean-field and structured approximation in a second experiment.
\Cref{fig:comparison} presents the results of using the GPCM, CGPCM, and RGPCM to infer their respective standard kernels from a noisy sample.
Observe that, in all cases, the structured scheme presents marked improvements in MLL;
indeed, \cref{fig:comparison} shows that the true kernels hit the edges or even just fall outside of the uncertainty intervals produced by the mean-field scheme.
In this case, however, the structured scheme did not yield improvements in RMSE.
Generally, the benefit of the structured scheme constitutes improved uncertainty estimates.
Having demonstrated the advantages of the structured scheme, we commit to the structured scheme in the remaining experiments with real data.

\begin{figure}[t]
    \small
    \centering
    \includegraphics[width=\linewidth]{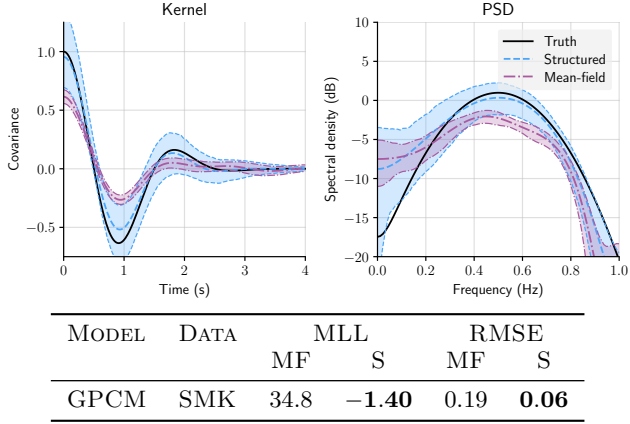} \\[0.5em]
    \begin{tabular}{llcccc}
\toprule
\textsc{Model}  & \textsc{Data} & \multicolumn{2}{c}{\textsc{MLL}} & \multicolumn{2}{c}{\textsc{RMSE}} \\
 &  & \textsc{MF} & \textsc{S} & \textsc{MF} & \textsc{S}\\ \midrule
\textsc{GPCM} & \textsc{SMK} & $34.8$ & $\mathbf{-1.40}$ & $0.19$ & $\mathbf{0.06}$ \\
\bottomrule
    \end{tabular}
    \caption{
        Fitting the GPCM on 200 noisy observations drawn from a GP with a one-component spectral mixture kernel using the mean-field inference scheme (MF) and structured inference scheme (S).
        The numbers of inducing points are $n_u = 80$ and $n_z = 80$.
        Shows the prediction for the kernel and for the PSD.
        Also shows the MLL and RMSE of the kernel prediction for both inference schemes.
        Best numbers are boldfaced.
    }
    \vspace{-1em}
    \label{fig:smk}
\end{figure}

\begin{figure}[t]
    \centering
    \small
    \includegraphics[width=\linewidth]{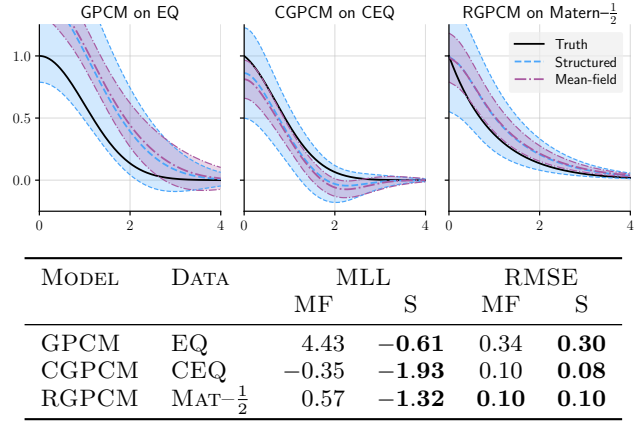} \\[1em]
    \begin{tabular}{llcccc}
\toprule
\textsc{Model}  & \textsc{Data} & \multicolumn{2}{c}{\textsc{MLL}} & \multicolumn{2}{c}{\textsc{RMSE}} \\
 &  & \textsc{MF} & \textsc{S} & \textsc{MF} & \textsc{S}\\ \midrule
\textsc{GPCM} & \textsc{EQ} & $\hphantom{-}4.43$ & $\mathbf{-0.61}$ & $0.34$ & $\mathbf{0.30}$ \\
\textsc{CGPCM} & \textsc{CEQ} & $-0.35$ & $\mathbf{-1.93}$ & $0.10$ & $\mathbf{0.08}$ \\
\textsc{RGPCM} & \textsc{Mat–$\frac{1}{2}$} & $\hphantom{-}0.57$ & $\mathbf{-1.32}$ & $\mathbf{0.10}$ & $\mathbf{0.10}$ \\
\bottomrule
        \end{tabular}
    \caption{
        Fitting the GPCM, CGPCM, and RGPCM on 400 noisy observations drawn from a GP with respectively a CEQ (see \cref{sec:inference}), EQ, and Mat\'ern--$\frac12$ kernel using the mean-field inference scheme (MF) and structured inference scheme (S).
        The numbers of inducing points are $n_u = 30$ and $n_z = 80$.
        Shows the prediction for the kernel and the MLL and RMSE for both inference schemes. Numbers within $1\%$ of the best number are boldfaced.
    }
    \label{fig:comparison}
\end{figure}

\paragraph{Predicting crude oil prices.}
We demonstrate the benefits of the relaxed smoothness assumptions of the CGPCM and RGPCM by, for the years 2012--2017, predicting  NASDAQ crude oil daily prices\footnote{\url{https://www.nasdaq.com/market-activity/commodities/cl\%3Anmx}} in every odd week of the second half of the year from all other data points in that year.
\Cref{fig:crude_oil} presents the results.
To begin with, we focus on the predictions by the models in the top two plots.
Observe that the predictions by the GPCM (blue) are much smoother than the predictions by the CGPCM (purple) and RGPCM (green).
The smoothness of the GPCM's predictions causes the model to explain more intricate structure of the signal as noise, which consequently leads to a significant increase in MLL.
That the predictions of the GPCM are too smooth is corroborated by the predictions for the PSD: the GPCM predicts a quickly decaying EQ-like spectrum, whereas the CGPCM and RGPCM predict support at higher frequencies and exhibit more fine-grained spectral structure.

\begin{figure*}[t]
    \vspace{-0.25em}
    \small
    \centering
    \includegraphics[width=\linewidth]{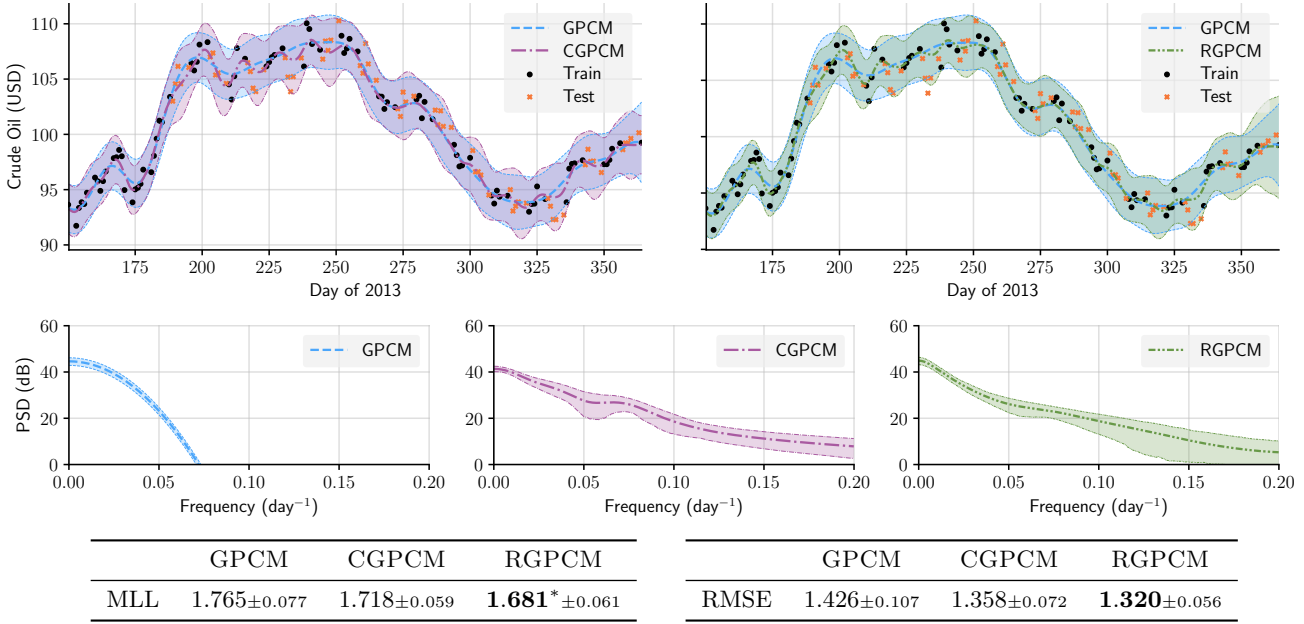} \\[0.75em]
    \hspace{1.5em}~\hfill
    \begin{tabular}{lccc}
        \toprule
         & \textsc{GPCM} & \textsc{CGPCM} & \textsc{RGPCM} \\ \midrule
        MLL
        & $1.765 {\scriptstyle\pm 0.077}$
        & $1.718 {\scriptstyle\pm 0.059}$
        & $\mathbf{1.681}^{*} {\scriptstyle\pm 0.061}$ \\
        \bottomrule
    \end{tabular}
    \hfill
    \begin{tabular}{lccc}
        \toprule
         & \textsc{GPCM} & \textsc{CGPCM} & \textsc{RGPCM} \\ \midrule
        RMSE
        & $1.426{\scriptstyle\pm 0.107}$
        & $1.358 {\scriptstyle\pm 0.072}$
        & $\mathbf{1.320} {\scriptstyle\pm 0.056}$ \\
        \bottomrule
    \end{tabular}
    \hfill~
    \caption{
        Predictions for NASDAQ crude oil prices by the GPCM, CGPCM, RGPCM for every odd week in the second half of 2013
        (see \cref{sec:experiments}).
        Tables show the MLL and RMSE of these predictions.
        Also shows the predictions for the PSDs.
        Models have $n_u = 50$ and $n_z = 150$.
        Errors are standard deviations.
        Best numbers are boldfaced.
        $\;{}^*$Significantly better than the GPCM with $p < 0.05$ using a paired test.
    }
    \vspace{-0.5em}
    \label{fig:crude_oil}
\end{figure*}
\begin{figure*}[t]
    \centering
    \includegraphics[width=.95\linewidth]{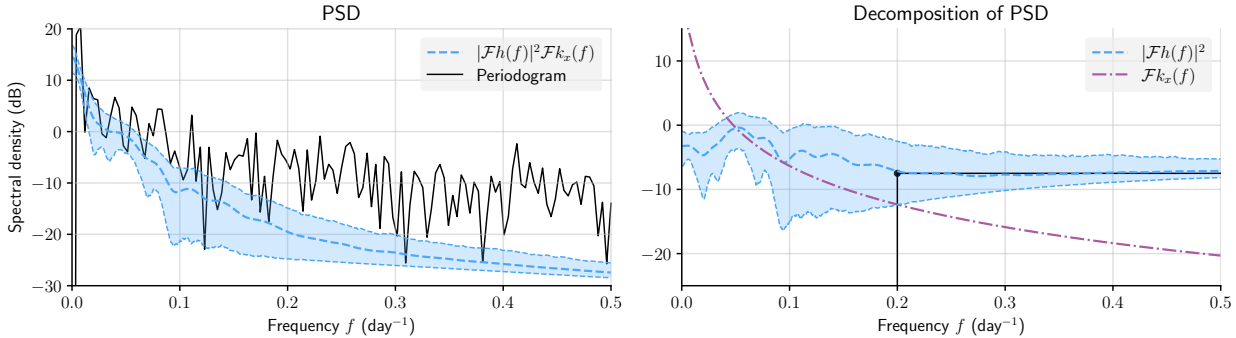}
    \caption{
        Prediction for the PSD of $\log$-VIX from the year 2000 by the RGPCM (see \cref{sec:experiments}).
        Shows the periodogram, the whole prediction $|\mathcal{F}h(f)|^2\mathcal{F}k_x(f)$ for the PSD, the Mat\'ern--$\frac12$ part $\mathcal{F}k_x(f)$ of the predicted PSD, and the modulation by the filter $|\mathcal{F}h(f)|^2$.
        We set $n_u = 60$ and $n_z$ equal to the number of time points, allowing the RGPCM to detect fluctuations up to the Nyquist frequency.
        Observe that the prediction for $|\mathcal{F}h(f)|^2$ is flat after $f = 0.2$ day${}^{-1}$.
    }
    \vspace{-1em}
    \label{fig:vix-psd}
\end{figure*}

\paragraph{Forecasting the Cboe Volatility Index.}
The Cboe Volatility Index\footnote{\url{https://www.cboe.com/tradable_products/vix/vix_historical_data/}} (VIX) is an index which measures the market's expectations for short-term S\&P500 price changes.
In this experiment, we train the models on the year 2015, retain the posterior over $h$, move forward a year and, in a one-week rolling window fashion, for 100 weeks, predict $\log$-VIX one week ahead given the past four weeks.
\Cref{tab:vix-prediction} presents the results.
Whereas the CGPCM outperforms the GPCM, which means that the causality assumption is helpful, the even weaker smoothness assumptions of the RGPCM yield the best uncertainty and mean estimates.

\begin{table}[t]
    \small
    \centering
    \begin{tabular}{l@{\hspace{4pt}}c@{\hspace{4pt}}c@{\hspace{4pt}}c}
        \toprule
         & \textsc{GPCM} & \textsc{CGPCM} & \textsc{RGPCM} \\ \midrule
        MLL & $\hphantom{-}0.089^{\hphantom{*}} {\scriptstyle\pm 0.087}$ & $-0.397^{*} {\scriptstyle\pm 0.070}$ & $\mathbf{-0.464}^{*} {\scriptstyle\pm 0.072}$\\
        RMSE & $\hphantom{-}0.210^{\hphantom{*}} {\scriptstyle\pm 0.010}$ & $\hphantom{-}0.143^{*} {\scriptstyle\pm 0.007}$ & $\hphantom{-}\mathbf{0.134}^{*} {\scriptstyle\pm 0.007}$ \\
        \bottomrule
    \end{tabular}
    \caption{
        Average one-week-ahead prediction result for $\log$-VIX (see \cref{sec:experiments}).
        Shows the MLL and RSME for the GPCM, CGPCM, and RGPCM. Best numbers boldfaced.
        Errors are standard deviations.
        $\;{}^*$Significantly better than all worse scores with $p < 10^{-6}$ using a paired test.
    }
    \vspace{-1.5em}
    \label{tab:vix-prediction}
\end{table}

\paragraph{Analysing the Cboe Volatility Index.}
In the final experiment, we use the RGPCM to investigate on which length scale the $\log$-VIX reasonably be approximated by an OU process.
We fit the RGPCM to all $\log$-VIX in the year 2000 using a number of inducing points which allows the RGPCM to detect fluctuations up to the Nyquist frequency.
We then use the property of the RGPCM that it is a spectrally modulated version of an OU process (see \cref{sec:inference} and \cref{app:rgpcm}):
the prediction of the PSD $|\mathcal{F}h(f)|^2\mathcal{F}k_x(f)$ by the RGPCM, where $\mathcal{F}$ denotes the Fourier transformation, can be decomposed into the prediction of the spectrum $\F k_x(f)$ for the OU process $x$ and a prediction of the modulation by the filter $|\mathcal{F}h(f)|^2$.
\cref{fig:vix-psd} shows that the prediction for $|\mathcal{F}h(f)|^2$ is flat after $f = 0.2$ day${}^{-1}$.
We conclude that the $\log$-VIX data can reasonably be modelled with an OU process on a length scale of at most $0.2^{-1} = 5$ days; on longer length scales, the RGPCM predicts more intricate spectral structure.

\hiddensection{RELATED WORK}
\citet{Tobar:2015:Inter-Domain_Inducing} extend the GPCM by considering an harmonic inter-domain transformation for $x$, making the model more suitable for more complex spectral estimation tasks.
In the supplement, \citet{Tobar:2015:Learning_Stationary} point out that the PSD of $f \cond \vu$ is a mixture of Gaussians centred at frequencies determined by the inducing point locations.
From this point of view, the GPCM is related to the models by \citet{Oliva:2016:Bayesian_Nonparametric_Kernel-Learning}, who use a Dirichlet process to parametrise the PSD; by \citet{Jang:2017:Scalable_Levy_Process_Priors_for}, who use a Levy process for the kernel, resulting in PSDs consisting of Laplacian mixtures; and by \citet{Benton:2019:Function-Space_Distributions_Over_Kernels}, who model the log-density of the PSD with a GP.
However, to perform inference, all these models employ general-purpose MCMC procedures;
in contrast, our inference scheme exploits the additional structure that the conditionals of the optimal variational solution can be sampled from directly to construct a Gibbs sampler which  mixes quickly in practice.
Finally, a construction like the GPCM can also be found in other fields:
\citet{Pillonetto:2010:A_New_Kernel-Based_Approach_for,Wagberg:2018:Regularized_Parametric_System_Identification_A,Chen:2018:On_Kernel_Design_for_Regularized} use a frequentist approach similar to GPCM for the purpose of system identification.

\hiddensection{DISCUSSION}
The goal of this paper was to redesign the GPCM to induce a richer distribution over the spectrum. 
We introduced the \emph{Causal GPCM}, able to model signals of varying level of irregularity, and the \emph{Rough GPCM}, able to model even more irregular signals.
The RGPCM is particularly appealing, because it avoids the implementation difficulties of the CGPCM and enjoys the benefits of variational Fourier features for improved approximation qualities.
Experiments demonstrated that the relaxed smoothness assumptions of the CGPCM and RGPCM can yield  substantially improved uncertainty and mean estimates on real-world data.
In addition, to address the deficiencies of the original mean-field inference scheme by \citet{Tobar:2015:Learning_Stationary}, 
we introduced a structured approximation which is able to fully retain the correlation structure between the latent variables.
Experiments showed that the structured scheme generally gives marked improvements in uncertainty estimates and can perform well in cases where the mean-field scheme falls over.

\section*{Acknowledgements}
We thank anonymous referees for helpful comments and discussions.
Wessel P. Bruinsma was supported by the Engineering and Physical Research Council (studentship number 10436152). 
Martin Tegn\'er thanks the Oxford-Man Institute for research support.
Richard E. Turner is supported by Google, Amazon, ARM, Improbable and EPSRC grant EP/T005386/1.
\bibliographystyle{unsrtnat}
\bibliography{bibliography}

\clearpage
\appendix

\thispagestyle{empty}
\onecolumn \makesupplementtitle

\tableofcontents
\clearpage

\section{Proof of Proposition \ref{prop:smoothness_gpcm}}
\label{app:proof_gpcm}

\begin{proof}[Proof of Proposition \ref{prop:smoothness_gpcm}.]
    To begin with, $h$ is a sample of a GP with an EQ kernel multiplied by a smooth window $w$, so $h$ is infinitely differentiable almost surely.
    Since, in addition, the window $w(t)=e^{-\alpha t^2}$ and all its derivatives decay to zero quickly, almost surely, (1) $h$ and all derivatives of $h$ go to zero at infinity and (2) any product of $h$ and its derivatives is dominated by an integrable function.
    This is argued more rigorously in \cref{app:well-behavedness-filter}.
    We will use these fact implicitly in the remainder of the proof to freely interchange integral and derivative.

    Note that
    \begin{equation}
        k_{f \cond h}(r)
        = k_{f \cond h}(0) + k_{f \cond h}'(0)r + \tfrac{1}{2}k_{f \cond h}''(0)r^2 + O(|r|^3),
    \end{equation}
    so sample paths of the GPCM are almost surely everywhere differentiable if $\smash{k_{f \cond h}'(0)} = 0$ and $\smash{k_{f \cond h}''(0)} \neq 0$ \citep[Thm 3 from][]{Cambanis:1973:On_Some_Continuity_and_Differentiability}.
    First, use integration by parts to find
    \begin{equation}
        k_{f \cond h}'(0)
        = \int^\infty_{-\infty} h'(\tau) h(\tau) \isd{\tau}
        = \cancel{\lim_{\tau \to \infty} [h^2(\tau) - h^2(-\tau)]} - \int^\infty_{-\infty} h(\tau) h'(\tau) \isd{\tau}
        = \vphantom{\int^\infty_{-\infty}} - k_{f \cond h}'(0)
    \end{equation}
    almost surely.
    Thus, $k_{f \cond h}'(0) = 0$ almost surely.
    Second, again use integration by parts to find
    \begin{equation}
        k_{f \cond h}''(0)
        = \int^\infty_{-\infty} h''(\tau) h(\tau) \isd{\tau}
        =  \cancel{\lim_{\tau \to \infty} [h'(\tau) h(\tau) - h'(-\tau) h(-\tau)]} - \int^\infty_{-\infty} (h'(\tau))^2 \isd{\tau}
        < 0
    \end{equation}
    almost surely, so $k_{f \cond h}''(0)\neq0$ almost surely.
\end{proof}

\subsection{Well-Behavedness of the Filter}
\label{app:well-behavedness-filter}
We show that the filter $h$ is well behaved.
In particular, we show that, almost surely, (1) $h$ and all derivatives of $h$ go to zero at infinity and (2) any product of $h$ and its derivatives is dominated by an integrable function.

\begin{definition}
    A function $w\colon \R \to \R$ is said to \textit{decay (sufficiently) quickly} if there exist $p_i \ge 0$ and $\alpha_i, \beta_i > 0$ such that
    \begin{equation}
        |w(t)| \le \sum_{i=1}^n|t|^{p_i} \exp(-\alpha_i |t|^{\beta_i})
    \end{equation}
    for all $t \in \R$.
\end{definition}

\begin{lemma} \label{lem:summation}
    Let $w$ decay sufficiently quickly, and let $(t_n)_{n\ge1}\sub \R$, $t_n \uparrow \infty$ be such that $t_n \ge n$ for all $n$.
    Then $\sum_{n=1}^\infty n\, |w(t_n)| < \infty$.
\end{lemma}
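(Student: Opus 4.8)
The plan is to reduce the claim to a single term of the ``decays quickly'' bound, then use the hypothesis $t_n \ge n$ to trade the prefactor $n$ for an extra power of $t_n$, after which summability follows by comparison with a convergent series.

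First I would fix constants $p_i \ge 0$ and $\alpha_i,\beta_i > 0$ realising the bound $|w(t)| \le \sum_i |t|^{p_i}\exp(-\alpha_i|t|^{\beta_i})$ (renaming the number of summands so as not to clash with the running index $n$). Since all terms are nonnegative, $\sum_{n\ge1} n\,|w(t_n)| \le \sum_i \sum_{n\ge1} n\,|t_n|^{p_i}\exp(-\alpha_i|t_n|^{\beta_i})$ by Tonelli, so it suffices to show that $\sum_{n\ge1} n\,|t_n|^{p}\exp(-\alpha|t_n|^{\beta}) < \infty$ for a single fixed triple $(p,\alpha,\beta)$ with $p\ge0$, $\alpha,\beta>0$; summing over the finitely many $i$ then finishes.

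For such a triple, set $q = p+1$ and $g(t) = t^{q}\exp(-\alpha t^{\beta})$ for $t>0$. Using $0 < n \le t_n$ gives $n\,|t_n|^{p}\exp(-\alpha|t_n|^{\beta}) \le g(t_n)$. Examining $\log g(t) = q\log t - \alpha t^{\beta}$ shows $g$ is strictly decreasing on $(T_0,\infty)$ with $T_0 := (q/(\alpha\beta))^{1/\beta}$. Hence for every integer $n > T_0$ we have $t_n \ge n > T_0$ and therefore $g(t_n) \le g(n)$, so the tail is dominated termwise by $\sum_{n>T_0} g(n) = \sum_{n>T_0} n^{q}\exp(-\alpha n^{\beta})$, which converges because $n^{q}\exp(-\alpha n^{\beta}) = o(n^{-2})$. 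The finitely many terms with $n \le T_0$ contribute a finite amount, completing the proof.

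I do not anticipate any real obstacle: the argument is elementary. The only points needing minor care are the symbol clash noted above, and verifying the monotonicity threshold $T_0$ so that the inequality $g(t_n) \le g(n)$ is valid for all sufficiently large $n$ (the hypothesis $t_n \uparrow \infty$ is not even needed beyond what $t_n \ge n$ already provides).
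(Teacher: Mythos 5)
Your proof is correct and follows essentially the same route as the paper: use $n \le t_n$ to dominate each term by $t_n^{p+1}\exp(-\alpha t_n^{\beta})$ and compare with the convergent series $\sum_n n^{p+1}\exp(-\alpha n^{\beta})$, whose convergence is the standard exponential-beats-polynomial fact (the paper makes this explicit via $e^{-x}\le m!\,x^{-m}$). The only difference is cosmetic: where the paper passes to $\lfloor t_n\rfloor$ and sums over a subsequence, you invoke eventual monotonicity of $t\mapsto t^{p+1}e^{-\alpha t^{\beta}}$ beyond the threshold $T_0$ to get $g(t_n)\le g(n)$ directly, which is an equally valid (and arguably slightly cleaner) way to make the comparison.
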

\begin{proof}
    To begin with, note that
    \begin{equation}
        \sum_{n=1}^\infty n |t_n|^p e^{-\alpha t_n^\beta}
        \le \sum_{n=1}^\infty \floor{t_n} (\floor{t_n} + 1)^p \exp(-\alpha \floor{t_n}^\beta).
    \end{equation}
    Consider the sequence
    \begin{equation}
        a_n
        = n(n + 1)^p \exp(-\alpha n^\beta)
        \le 2^p n^{p + 1} \exp(-\alpha n^\beta).
    \end{equation}
    We claim that $\sum_{n=1}^\infty a_n < \infty$, where the convergence is absolute.
    Then $\sum_{k=1}^\infty a_{n_k} < \infty$ for any subsequence $(n_k)_{k\ge1} \sub\N$.
    In particular, $\sum_{k=1}^\infty a_{\floor{t_k}} < \infty$, which shows the result.
    To show the claim, let $m \in \N$ be such that $m \beta > p + 3$.
    Then the estimate $\exp(-x) \le m! x^{-m}$ with $x \ge 0$ gives
    \begin{equation}
        n^{p + 1} \exp(-\alpha n^\beta)
        \le n^{p + 1}\cdot m! \parens*{\alpha n^\beta}^{-m}
        = \frac{m!}{\alpha^m} n^{p + 1 - m \beta}
        \le \frac{m!}{\alpha^m} n^{-2}.
    \end{equation}
    Since $\sum_{n=1}^\infty n^{-2} < \infty$, indeed $\sum_{k=1}^\infty a_{n} < \infty$.
\end{proof}

\begin{proposition} \label{prop:decay}
    Let $f$ be a stationary Gaussian process and let $w$ be a function that decays sufficiently quickly.
    Then
    \begin{equation}
        \lim_{t\to\infty} w(t) f(t) = 0
    \end{equation}
    almost surely.
\end{proposition}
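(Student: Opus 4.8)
The plan is to reduce the statement to two facts: a stationary Gaussian process grows at most polynomially (in fact only like $\sqrt{\log t}$) along the reals, whereas $w$ decays faster than any polynomial, so that their product vanishes. Throughout I assume $f$ has a version with continuous (hence locally bounded) sample paths; this holds in every use of the proposition in the paper and is what makes $\lim_{t\to\infty} w(t)f(t)$ a meaningful almost-sure statement. Write $\sigma^2 = \Var f(t)$, which is finite and constant by stationarity; if $\sigma^2 = 0$ then $f \equiv 0$ a.s.\ and the claim is trivial, so assume $\sigma^2 > 0$.

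First I would control the local suprema $M_n := \sup_{t\in[n,n+1]} \abs{f(t)}$ for $n\in\N$. By stationarity every $M_n$ has the law of $M_0$, and $M_0 < \infty$ a.s.\ because a continuous function is bounded on a compact interval. The Borell--TIS inequality (Gaussian concentration of the supremum) then gives $\mu := \E[M_0] < \infty$ together with the tail bound $\P(M_0 \ge \mu + u) \le \exp(-u^2/(2\sigma^2))$ for all $u \ge 0$. Hence $\sum_{n\ge1}\P(M_n \ge \mu + n) \le \sum_{n\ge1}\exp(-n^2/(2\sigma^2)) < \infty$, and Borel--Cantelli yields a.s.\ a (random) $N$ with $M_n < \mu + n \le 2n$ for all $n \ge N$; in particular $M_n = O(n)$ almost surely.

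Next I would quantify the decay of $w$ over unit blocks. For each $n\ge1$ choose $t_n \in [n,n+1]$ with $\abs{w(t_n)} \ge \tfrac12\sup_{t\in[n,n+1]}\abs{w(t)}$. Since $t_{n-1} \le n \le t_n$, the sequence is nondecreasing with $t_n \ge n$ and $t_n \to \infty$, so \cref{lem:summation} gives $\sum_n n\,\abs{w(t_n)} < \infty$; therefore $n\,\abs{w(t_n)} \to 0$ and $n\sup_{t\in[n,n+1]}\abs{w(t)} \le 2n\,\abs{w(t_n)} \to 0$. (The same follows directly from the definition of quick decay, since $\sup_{t\in[n,n+1]}\abs{w(t)} \le \sum_i (n+1)^{p_i} e^{-\alpha_i n^{\beta_i}}$.) To conclude: for $t\in[n,n+1]$ we have $\abs{w(t)f(t)} \le \big(\sup_{s\in[n,n+1]}\abs{w(s)}\big) M_n$, which on the a.s.\ event from the previous step is at most $2n\sup_{s\in[n,n+1]}\abs{w(s)} \to 0$; letting $n\to\infty$ exhausts $t\to\infty$, so $w(t)f(t)\to 0$ almost surely.

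The crux is the first step: one must bound the supremum of a Gaussian process over a continuum of times rather than over a discrete grid, which is exactly where sample-path regularity is needed and where a Gaussian concentration inequality does the real work. Everything afterwards is bookkeeping — Borel--Cantelli converts the Gaussian tail into an almost-sure polynomial bound on the $M_n$, and the super-polynomial decay of $w$ (via \cref{lem:summation}) overwhelms it. The only other points that need care are the degenerate case $\sigma^2 = 0$ and making the sample-path-continuity hypothesis explicit.
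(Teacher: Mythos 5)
Your proof is correct, but it takes a genuinely different route from the paper's. The paper argues by contradiction: on the event where $\limsup_{t\to\infty}|w(t)f(t)|=L>0$ it extracts a sequence $t_n\uparrow\infty$ with $t_n\ge n$ along which $|w(t_n)f(t_n)|$ stays above $(C_0+n)^{-1}$, bounds $\P\bigl(|w(t_n)f(t_n)|\ge (C_0+n)^{-1}\bigr)\le C_1(C_0+n)|w(t_n)|$ by Markov's inequality using only $\E|f(0)|<\infty$, and concludes with \cref{lem:summation} and Borel--Cantelli; no path regularity and no Gaussian concentration are invoked, only first moments (though the extracted sequence there is $\omega$-dependent, which the paper glosses over). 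You instead control the whole continuum directly: block suprema $M_n=\sup_{[n,n+1]}|f|$ are tamed by the Borell--TIS inequality plus Borel--Cantelli to get an a.s.\ $O(n)$ bound, and the superpolynomial decay of $w$ over unit blocks (either via \cref{lem:summation} or directly from the definition of quick decay) kills the product. What your approach buys is a clean uniform statement over $t\in[n,n+1]$ with an explicit a.s.\ growth rate, avoiding any argument along a realization-dependent sequence; what it costs is heavier machinery (Borell--TIS rather than Markov) and two hypotheses not present in the statement, which you rightly flag: a version of $f$ with continuous sample paths (satisfied in every use in the paper, where $f$ is a smooth GP or one of its derivatives) and, implicitly, centredness of $f$ for the concentration bound as written --- for a non-centred stationary process subtract the constant mean, which changes nothing since $w$ is bounded times decaying. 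With those caveats made explicit, the argument is complete.
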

\begin{proof}
    For suppose not.
    Let $A$ be measurable set with $\P(A)>0$ on which
    \begin{equation}
        \limsup_{t\to\infty} |w(t)f(t)| = L > 0
    \end{equation}
    with $L=\infty$ allowed.
    Then there is a sequence $(t_n)_{n \ge 1}\sub\R$, $t_n \uparrow \infty$, such that
    \begin{equation}
        \lim_{n\to\infty} |w(t_n) f(t_n)| = L.
    \end{equation}
    Since $t_n \uparrow \infty$, we may assume that $t_n \ge n$ for all $n$ by passing to a subsequence.
    Set
    \begin{equation}
        B_n = \set{|w(t_n) f(t_n)| \ge (C_0 + n)^{-1}},
    \end{equation}
    where $C_0 > 0$ is chosen such that $C_0^{-1}<L$.
    By construction, $\P(B_n \text{ i.o.}) \ge \P(A) > 0$.
    We claim, however, that $\sum_{n=1}^\infty \P(B_n) < \infty$.
    Then, by the Borel--Cantelli lemma, $\P(B_n \text{ i.o.}) = 0$, which indeed is a contradiction.
    To show the claim, note that, by Markov's inequality,
    \begin{align}
        \P(B_n)
        &=\P\parens{|w(t_n) f(t_n)| \ge (C_0 + n)^{-1}} \\
        &\le (C_0 + n) |w(t_n)| \E(|f(t_n)|) \\
        &\le C_1 (C_0 + n) |w(t_n)|
    \end{align}
    where $C_1 = \E(|f(0)|)<\infty$.
    Thus $\sum_{n=1}^\infty \P(B_n) < \infty$, by \cref{lem:summation}.
\end{proof}

\begin{lemma}
    Let $f \ge 0$ and $g > 0$ be continuous.
    If
    \begin{equation}
        \lim_{t\to\infty} \frac{f(t)}{g(t)} = 0
        \quad\text{and}\quad\lim_{t\to-\infty} \frac{f(t)}{g(t)} = 0,
    \end{equation}
    then there exists a $C>0$ such that $C g$ dominates $f$.
\end{lemma}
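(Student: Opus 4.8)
The plan is to split $\R$ into a compact interval $[-T,T]$ around the origin and its complement $\{|t|>T\}$, and to establish the domination $f \le Cg$ separately on each piece, then take the larger of the two constants. On the tails the hypothesis gives domination essentially for free; on the compact part the extreme value theorem does the work.

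First I would use the two limit hypotheses: since $f(t)/g(t)\to 0$ as $t\to+\infty$ and as $t\to-\infty$, there is a $T>0$ such that $f(t)/g(t)\le 1$, i.e.\ $f(t)\le g(t)$, for every $t$ with $|t|\ge T$. Next, on the compact interval $[-T,T]$ the function $g$ is continuous and strictly positive, so by the extreme value theorem $m:=\min_{t\in[-T,T]} g(t)$ is attained and $m>0$; likewise $M:=\max_{t\in[-T,T]} f(t)$ is attained and finite, with $M\ge 0$ because $f\ge 0$. Hence for $t\in[-T,T]$ we have $f(t)\le M = (M/m)\,m \le (M/m)\,g(t)$, using $g(t)\ge m$.

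Finally I would set $C:=\max\{1,\,M/m\}$, which is strictly positive. For $|t|\ge T$ this gives $f(t)\le g(t)\le Cg(t)$, and for $|t|\le T$ it gives $f(t)\le (M/m)g(t)\le Cg(t)$, so $f(t)\le Cg(t)$ for all $t\in\R$; that is, $Cg$ dominates $f$. There is no genuinely hard step: the only points requiring a moment's care are that $C$ must be positive (ensured by taking the maximum with $1$, which also covers the degenerate case $M=0$) and that the extreme value theorem applies precisely because $[-T,T]$ is compact and $f,g$ are continuous. This lemma is then applied with $g$ an integrable envelope to upgrade "products of $h$ and its derivatives vanish at $\pm\infty$'' into "such products are dominated by an integrable function''.
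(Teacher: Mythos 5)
Your proof is correct and follows essentially the same route as the paper's: choose a threshold beyond which $f/g \le 1$, use compactness of the central interval to bound $f$ by its maximum and $g$ below by its positive minimum, and take $C = \max\{1, M/m\}$. No gaps; the argument matches the paper's in both structure and detail.
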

\begin{proof}
    Let $R$ be such that $|t| \ge R$ implies that $f(t)/g(t) < 1$.
    Then $g$ dominates $f$ on $(-\infty, -R] \cup [R,\infty)$.
    Since $[-R, R]$ is compact and $f$ and $g$ are continuous, on $[-R, R]$, $f$ attains its maximum $M$ and $g$ its mimumum $m$, where $m > 0$ because $g > 0$.
    Setting $C = \max\set{1, M / m}$ then works.
\end{proof}

The filter $h$ is generated according to
\begin{equation}
    g \sim \GP(0, k_g), \qquad
    h(t)\cond g = w(t) g(t),
\end{equation}
where
\begin{equation}
    w(t) = \exp(-\alpha t^2), \qquad
    k_g(t - t') = \exp(-\gamma(t - t')^2).
\end{equation}
Here $g$ is stationary and, almost surely, has pathwise derivatives of all orders \citep[\eg, Theorem 4,][]{Cambanis:1973:On_Some_Continuity_and_Differentiability}.

Let $\beta$ be such that $0 < \beta < \alpha$.
Then, almost surely,
\begin{equation}
    \lim_{t\to\infty}\frac{|w(t)g(t)|}{\exp(-\beta t^2)}
    = 0
    \quad\text{and}\quad
    \lim_{t\to-\infty}\frac{|w(t)g(t)|}{\exp(-\beta t^2)}
    = 0
\end{equation}
because $t \mapsto \exp(\beta t^2) w(t) = \exp(-(\alpha - \beta)t^2)$ decays sufficiently quickly.
Thus, almost surely, there exists a $C>0$ such that $t\mapsto C \exp(-\beta t^2)$ dominates $w g$, and $t\mapsto C \exp(-\beta t^2)$ is integrable and goes to zero at infinity.

Note that $t \mapsto \exp(\beta t^2) |w^{(n)}(t)|$, where $w^{(n)}$ is the $n^{\text{th}}$ derivative of $w$, decays sufficiently quickly for all $n \in \N$.
For any derivative of $w g$, use the product rule to expand and argue similarly to obtain a dominating function also of the form $t\mapsto C \exp(-\beta t^2)$.

In conclusion, almost surely, $h$ and all derivatives of $h$ are dominated by integrable functions that go to zero at infinity, and any product of these dominating functions is integrable.
Therefore, almost surely, (1) $h$ and all derivatives of $h$ go to zero at infinity and, (2) any product of $h$ and its derivatives is dominated by an integrable function.

\section{Proof of Proposition \ref{prop:smoothness}}
\label{app:proof_cgpcm}
\begin{proof}[Proof of Proposition \ref{prop:smoothness}]
    The proof proceeds like the proof for \cref{prop:smoothness_gpcm}.
    Let
    \begin{equation}
        z(r) = \int^\infty_0 h(r + \tau) h(\tau) \isd{\tau}.
    \end{equation}
    Then
    \begin{equation}
        k_{f\cond h}(r)
        = z(|r|)
        = z(0) + z'(0)|r| + \tfrac{1}{2}z''(0)r^2 + O(|r|^3),
    \end{equation}
    so sample paths of the CGPCM are almost surely everywhere differentiable if $z'(0) = 0$ and $z''(0) \neq 0$ and almost surely nowhere differentiable if $z'(0) \neq 0$ \citep[Thms 3 and 4 from][]{Cambanis:1973:On_Some_Continuity_and_Differentiability}.
    First, use integration by parts to find
    \begin{equation}
        z'(0)
        = \int^\infty_0 h'(\tau) h(\tau) \isd{\tau}
        = \cancel{\lim_{\tau \to \infty} h^2(\tau)} - h^2(0) - \int^\infty_0 h(\tau) h'(\tau) \isd{\tau}
        = -h^2(0) - z'(0)
    \end{equation}
    almost surely.
    Thus $z'(0) = - \frac{1}{2} h^2(0)$ almost surely.
    Second, if $h(0)=0$, again use integration by parts to find
    \begin{equation}
        z''(0)
        = \int_0^\infty h''(\tau) h(\tau) \isd{\tau}
        = \cancel{\lim_{\tau \to \infty} h'(\tau) h(\tau)} - h'(0) h(0) - \int^\infty_0 (h'(\tau))^2 \isd{\tau}
        = - \int^\infty_0 (h'(\tau))^2 \isd{\tau} < 0.
    \end{equation}
    almost surely.
    Therefore, if $h(0)=0$, then $z'(0)=0$ and $z''(0)\neq0$ almost surely; and if $h(0)\neq 0$, then $z'(0)\neq0$ almost surely.
    The result now follows.
\end{proof}

\section{The Causal Gaussian Process Convolution Model}
\label{app:cgpcm}

\subsection{Equivalent Formulations}
\paragraph{Linear system formulation:}
Let $k_h$ be the the following DEQ kernel:
\begin{equation}
    k_h(t, t') = \tilde\alpha^2 e^{-\alpha t^2 - \alpha t^{\prime2} - \gamma(t - t')^2}.
\end{equation}
Then the linear system formulation of the CGPCM is given by the following generative model:
\begin{equation}
    f(t)\cond h, x  = \int_{-\infty}^t h(t - \tau) x(\tau) \isd \tau
    \quad \text{where} \quad
    x \sim \GP(0, \delta(t - t')), \quad
    h \sim \GP(0, k_h(t, t')), 
\end{equation}
where $\delta(\vardot)$ denotes the Dirac delta function.
Compared to the GPCM, the linear system formulation of the CGPCM uses a \emph{causal} convolution operation.

\paragraph{Nonparametric kernel formulation:}
To derive the equivalent nonparametric kernel formulation, note that
\begin{equation}
    \E[f(t) \cond h]
    = \int_{-\infty}^t h(t - \tau) \E[x(\tau)] \isd \tau
    = 0.
\end{equation}
Moreover,
\begin{equation} \label{eq:kernel_cpgcm_causal}
    \E[f(t)f(t') \cond h]
    = \int_{-\infty}^t \int_{-\infty}^{t'} h(t - \tau)h(t' - \tau') \E[x(\tau)x(\tau')] \isd \tau' \isd \tau \\
    = \int_{-\infty}^{t \land t'} h(t - \tau)h(t' - \tau) \isd \tau.
\end{equation}
If $t \le t'$, then
\begin{equation}
    \E[f(t)f(t') \cond h]
    = \int^t_{-\infty} h(t - \tau) h(t' - \tau) \isd \tau
    = \int_0^\infty h(\tau) h(t' - t + \tau) \isd \tau.
\end{equation}
Similarly, if $t \ge t'$, then
\begin{equation}
    \E[f(t)f(t') \cond h]
    = \int^{t'}_{-\infty} h(t - \tau) h(t' - \tau) \isd \tau
    = \int_0^\infty h(t - t' + \tau) h(\tau) \isd \tau.
\end{equation}
Therefore, in any case,
\begin{equation}
    \E[f(t)f(t') \cond h]
    = \int_0^\infty h(|t - t'| + \tau) h(\tau) \isd \tau. 
\end{equation}
We conclude that the CGPCM is equivalent to the following generative model:
\begin{equation}
    f \cond h \sim \GP\parens*{
        0, \int_0^\infty h(\abs{t - t'} + \tau) h(\tau) \isd \tau
    }, \quad
    h \sim \GP(0, k_h(t, t')).
\end{equation}

\subsection{Local Behaviour}
We argue that the irregularity of the sample paths of the CGPCM is controlled by the value of $\abs{h(0)}$.
Because $h$ is smooth, for small $\ep > 0$ and $\tau \in (0, \ep)$, $h(\tau) \approx h(0)$, which means that
\begin{equation}
    f(t + \ep) - f(t)
    = \int_{t}^{t + \ep} h(t - \tau) x(\tau) \isd \tau
    \approx  h(0) \int_{t}^{t + \ep} x(\tau) \isd \tau
    \overset{\text{d}}{=} h(0) B_{\ep}
\end{equation}
where $\smash{\overset{\text{d}}{=}}$ denotes equality in distribution and $(B_t)_{t \ge 0}$ is a standard Brownian motion.
Therefore, $f$ can locally be approximated by a $\abs{h(0)}$-scaled Brownian motion, which means that the magnitude of the irregular increments is controlled by the value of $\abs{h(0)}$.

\subsection{Inducing Points}
\label{app:inducing_points_cgpcm}
For the filter $h$, define $n_u$ inducing point inputs $\vt_{u}$ initialised to uniformly spaced over $[-2\Delta, 2 \tau_w]$ where $\Delta$ is the inter-point spacing and $\tau_w$ the extent of the filter (see \cref{sec:gpcms}). Let the inducing points $\vu$ then be
\begin{equation}
    \vu = (h(t_{u,1}), \ldots, h(t_{u,n_u})).
\end{equation}
For the excitation signal $x$, we first define the inter-domain transformation
\begin{equation}
    s(\tau) = \int_{-\infty}^\infty \tilde\omega e^{-\omega (\tau - t)^2} x(t) \isd t.
\end{equation}
Then define $n_z$ inducing point inputs $\vt_z$ initialised to uniformly spaced over a window containing the data and let the inducing points $\vz$ be
\begin{equation}
    \vz = (s(t_{z,1}), \ldots, s(t_{z,n_u})).
\end{equation}
With these choices, we have the covariances
\begin{align}
    k_\vu(t)
    &\coloneqq \vphantom{\int_{-\infty}^\infty} \E[h(t) h(\vt_u)] = k_h(t, \vt_u), \\
    \mK_\vu
    &\coloneqq \vphantom{\int_{-\infty}^\infty} \E[h(\vt_u) h(\vt_u)^\T] = k_h(\vt_u, \vt_u^\T), \\
    k_\vz(t)
    &\coloneqq \E[x(t) s(\vt_z)]
    = \int_{-\infty}^\infty \tilde\omega e^{-\omega (\vt_z - t')^2} \E[x(t) x(t')] \isd t'
    = \tilde\omega e^{-\omega (\vt_z - t)^2}, \\
    \mK_\vz
    &\coloneqq
    \E[s(\vt_z) s(\vt_z)^\T]
    = \tilde\omega^2 \sqrt{\frac{\pi}{2\omega}} e^{-\frac12 \omega(\vt_z - \vt_z^\T)^2}
\end{align}
where the expression for $\mK_\vz$ follows from
\begin{align}
    \E[s(\vt_z) s(\vt_z)^\T]
    &=\int_{-\infty}^\infty \int_{-\infty}^\infty \tilde\omega^2 e^{-\omega (\vt_z - t)^2-\omega (\vt^\T_z - t')^2} \E[x(t) x(t')] \isd t' \isd t \\
    &=\int_{-\infty}^\infty \tilde\omega^2 e^{-\omega (\vt_z - t)^2-\omega (\vt^\T_z - t)^2}  \isd t \\
    &= \tilde\omega^2 \sqrt{\frac{\pi}{2\omega}} e^{-\frac12 \omega(\vt_z - \vt_z^\T)^2}.
\end{align}

\section{The Rough Gaussian Process Convolution Model}
\label{app:rgpcm}

\subsection{Equivalent Formulations}
\paragraph{Linear system formulation:}
Let $k_h$ be the covariance function of white noise windowed by $w(t) = \tilde\alpha e^{-\alpha|t|}$:
\begin{equation}
    k_h(t, t') = \tilde\alpha^2 e^{-\alpha |t| - \alpha |t^{\prime}|}\delta(t - t')
\end{equation}
where $\delta(\vardot)$ denotes the Dirac delta function.
Let $k_x$ be the covariance function of an Ornstein--Uhlenbeck process with length scale $\lambda$:
\begin{equation}
    k_x(t, t') = e^{-\lambda |t - t'|.}
\end{equation}
Then the linear system formulation of the RGPCM is given by the following generative model:
\begin{equation}
    f(t)\cond h, x  = \int_{-\infty}^t h(t - \tau) x(\tau) \isd \tau
    \quad \text{where} \quad
    x \sim \GP(0, k_x(t, t')), \quad
    h \sim \GP(0, k_h(t, t')).
\end{equation}
Compared to the (C)GPCM, the filter $h$ has now a  white noise prior and the input signal $x$ is  given by an OU process.

\paragraph{Nonparametric kernel formulation:}
To derive the equivalent nonparametric kernel formulation, note that
\begin{equation}
    \E[f(t) \cond h]
    = \int_{-\infty}^t h(t - \tau) \E[x(\tau)] \isd \tau
    = 0
\end{equation}
and
\begin{align}
    \E[f(t)f(t') \cond h]
    &= \int_{-\infty}^t \int_{-\infty}^{t'} h(t - \tau)h(t' - \tau') \E[x(\tau)x(\tau')] \isd \tau' \isd \tau \\
    &= \int_{-\infty}^t \int_{-\infty}^{t'}  h(t - \tau)h(t' - \tau') k_x(\tau - \tau') \isd \tau' \isd \tau \\
    &= \int_{0}^\infty \int_{0}^{\infty} h(\tau)h(\tau') k_x((t - t') - (\tau - \tau')) \isd \tau' \isd \tau.
\end{align}
We conclude that the RGPCM is equivalent to the following generative model:
\begin{equation}\label{eq:rgpcm_LS}
    f \cond h \sim \GP\parens*{
        0, \int_{0}^\infty \int_{0}^{\infty} h(\tau)h(\tau') k_x((t - t') - (\tau - \tau')) \isd \tau' \isd \tau
    }, \quad
    h \sim \GP(0, k_h(t, t')).
\end{equation}

\paragraph{Nonparametric spectral formulation:}
Associated with the  the linear system \eqref{eq:rgpcm_LS} is the \emph{frequency response}  of the filter $h$
\begin{equation}\label{eq:FR_rgpcm}
    g(\omega) = \int_0^\infty e^{-i\omega t}h(t) \isd t.
\end{equation}
Note that $g$ is a Gaussian process, since the Fourier transform in \eqref{eq:FR_rgpcm} is a linear operator. From  the \emph{spectral representation} of $f$, it then follows that the spectral density of $f$ is given by
 \begin{equation}\label{eq:PSD_rgpcm}
     \phi_f(\omega) = |g(\omega)|^2\phi_x(\omega),
 \end{equation}
  see  \cite{Lindgren:2012:Stationary_Stochastic_Processes}.
 Effectively, the spectrum is a (random) modulation of the input signal's spectrum $\phi_x(\omega)=\frac{1}{\pi}\frac{\lambda}{\lambda^2+\omega^2}$. This can also be seen 
 through the Fourier duality with the nonparameteric kernel 
  \begin{equation}
     \E[f(t)f(t') \cond g] = \int e^{i\omega(t-t')}|g(\omega)|^2\phi_x(\omega)\isd\omega.
 \end{equation}
 Finally, we note that the spectral density of the fractional Ornstein--Uhlenbeck process with Hurst exponent $H$ is given by
 \begin{equation}
     \phi_{\text{fOU}}(\omega) = c|\omega|^{1-2H}\frac{\lambda}{\lambda^2 + \omega^2}
 \end{equation} where $c$ is a normalising constant, see \cite{cheridito2003fractionalOU}. Hence, this is a parametric special case of the nonparameteric spectrum \eqref{eq:PSD_rgpcm}, namely $|g(\omega)|^2 = c\pi|\omega|^{1-2H}$.

\subsection{Inducing Points}
\label{app:inducing_points_rgpcm}
For the filter $h$, since it now enjoys a white noise prior, we first define an inter-domain transformation, which we choose the be \emph{causal}:
\begin{equation}
    s(\tau) = \int_{-\infty}^\tau \tilde\gamma e^{-\gamma|\tau - t|} h(t) \isd t.
\end{equation}
As we will see below, by choosing the inter-domain transform to be causal, the kernel of the inter-domain process $s$ will take a simple form.
Then define $n_u$ inducing point inputs $\vt_u$ initialised to uniformly spaced over $[0, \tau_w]$ where $\tau_w$ is the extent of the filter (see \cref{sec:gpcms}) and let the inducing points $\vu$ be
\begin{equation}
    \vu = (s(t_{u,1}), \ldots, s(t_{u,n_u})).
\end{equation}
For the excitation signal $x$, we consider a collection of projections tailored for learning features in the spectral domain \citep{Hensman:2018:Variational_Fourier_Features_for_Gaussian}.
These features are defined on a window $[a, b]$ of interest.
Typically, this window should contain the locations of all observed data points and also points where you want to make predictions.
To define the features, let $M \in \N$ and consider the following basis functions:
\begin{equation} \label{eq:rgpcm-vffs}
    \beta_m(t) = \begin{cases}
        1 & \text{if $m = 0$,} \\
        \cos(\omega_m(t - a)) & \text{if $1 \le m \le M$}, \\
        \sin(\omega_{m - M}(t - a)) & \text{if $M < m \le 2M$},
    \end{cases}
\end{equation}
where the frequencies are harmonics on the interval $[a, b]$:
\begin{equation}
    \omega_m = \frac{2 \pi m}{b - a}, \quad m = 1, \ldots, M.
\end{equation}
Denote the concatenation of all these features by $\vbeta(t)$.
We then let the inducing points $\vz$ be
\begin{equation}
    \vz = (\lra{x, \beta_0}_\H, \ldots, \lra{x, \beta_{n_z}}_\H)
\end{equation}
where $n_z = 2M + 1$ and $\lra{\vardot,\vardot}_\H$ is the inner product corresponding to the reproducing kernel Hilbert space $\H$ associated with $k_x$.
With these choices, we have the covariances
\begin{align}
    k_\vu(t)
    &\coloneqq \E[h(t) s(\vt_u)]
    = \int_{-\infty}^{\vt_u} \tilde\gamma e^{-\gamma|\vt_u- t'|} \E[h(t)h(t')] \isd t'
    = \tilde\gamma e^{-\gamma(t - \vt_u)} \ind(t \le \vt_u), \\
    \mK_\vu &\coloneqq \E[s(\vt_u) s(\vt_u)^\T] = \frac{\tilde\gamma^2}{2\gamma} e^{-\gamma|\vt_u - \vt_u^\T|}, \\
    k_\vz(t)
    &\coloneqq \vphantom{\int_{-\infty}^{\vt^\T_u}} \E[x(t) \lra{x, \vbeta}_\H]
    = \lra{\E[x(t) x], \vbeta}_\H
    = \lra{k_x(t, \vardot), \vbeta}_\H
    = \vbeta(t), \\
    \mK_\vz
    &\coloneqq \vphantom{\int_{-\infty}^{\vt^\T_u}} \E[\lra{x, \vbeta}_\H \lra{x, \vbeta^\T}_\H]
    = \lra{\lra{\E[x(\vardot) x(\vardot)], \vbeta}_\H, \vbeta^\T}_\H
    = \lra{\lra{k_x(\vardot,\vardot), \vbeta}_\H, \vbeta^\T}_\H
    = \lra{\vbeta, \vbeta^\T}_\H,
\end{align}
where we use the reproducing property of $k_x$ on $\H$.
The expression for $\mK_\vu$ follows from
\begin{align}
    \E[s(\vt_u) s(\vt^\T_u)]
    &= \int_{-\infty}^{\vt_u} \int_{-\infty}^{\vt^\T_u} \tilde\gamma^2 e^{-\gamma|\vt_u- t| -\gamma|\vt_u^\T- t'|} \E[h(t)h(t')] \isd t' \isd t \\
    &= \int_{-\infty}^{\vt_u \land \vt^\T_u} \tilde\gamma^2 e^{-\gamma(\vt_u- t) -\gamma(\vt_u^\T- t)} \isd t \\
    &= \vphantom{\int_{-\infty}^{\vt^\T_u}} \frac{\tilde\gamma^2}{2\gamma} e^{-\gamma(\vt_u + \vt_u^\T) - 2 \gamma (\vt_u \land \vt^\T_u)} \\
    &= \vphantom{\int_{-\infty}^{\vt^\T_u}} \frac{\tilde\gamma^2}{2\gamma} e^{-\gamma|\vt_u - \vt_u^\T|}.
\end{align}
Note that $\mK_\vz$ requires explicit computation of $\lra{\beta_m, \beta_n}_\H$ for all $m, n = 0, \ldots, M$, which can be done using an explicit expression for $\lra{\vardot,\vardot}_\H$;
see \citet{Hensman:2018:Variational_Fourier_Features_for_Gaussian} for details.

\section{Initialisation of the GPCM, CGPCM, and RGPCM}
\label{app:init}

Let the subscript $\vardot\ss{ac}$ refer to the GPCM, $\vardot\ss{c}$ to the CGPCM, and $\vardot\ss{r}$ to the RGPCM.
In this section, we derive a comparable and fair initialisation for the three models.
This initialisation follows from the following two requirements:
(1) the prior marginal variance is unity and (2) the length scales of the prior mean covariance are equal.

The prior marginal variance of the models are as follows:
\begin{equation}
    P\ss{c} = \frac12 \tilde\alpha^2 \sqrt{\frac{\pi}{2 \alpha}},\quad
    P\ss{ac} = \tilde\alpha^2 \sqrt{\frac{\pi}{2 \alpha}}, \quad
    P\ss{r} = \frac{\tilde\alpha^2}{2 \alpha}
\end{equation}
where we note that $P\ss{c} = \tfrac12 P\ss{ac}$:
the causality constraint cuts the reach of the filter in half.
Define $\tilde\alpha$ by requiring that the power is one.
This gives
\begin{equation}
    \tilde\alpha\ss{c}^2 = 2\sqrt{\frac{2 \alpha}{\pi}}, \quad
    \tilde\alpha\ss{ac}^2 = \sqrt{\frac{2 \alpha}{\pi}}, \quad
    \tilde\alpha\ss{rc}^2 = 2 \alpha.
\end{equation}
With these choices for $\tilde\alpha$, we obtain the following prior mean covariance functions:
\begin{align}
    k\ss{ac}(r) &= \exp(-(\tfrac12\alpha + \gamma) r^2), \\
    k\ss{c}(r) &= (1 - \erf(\smash{\sqrt{\tfrac12 \alpha}}|r|)) \exp(-(\tfrac12\alpha + \gamma) r^2), \\
    k\ss{r}(r) &= \exp(-\lambda |r|) 
\end{align}
where we note that $k\ss{c}$ is a windowed version of $k\ss{ac}$.
This window $r \mapsto 1 - \erf(\smash{(\tfrac12 \alpha)^{\tfrac12}}|r|)$ is nondifferentiable at $r = 0$ and responsible for the nowhere differentiable sample paths of the CGPCM.

Define the length scale $\tau$ of a non-negative function $k \colon [0, \infty) \to \R$ by
\begin{equation}
    \tau = \frac{1}{k(0)}\int_0^\infty k(r) \isd r.
\end{equation}
Then the length scales of the windows $w\ss{ac}$, $w\ss{c}$, $w\ss{r}$ are given by
\begin{equation}
    \tau_{w,\mathrm{ac}} = \sqrt{\frac{\pi}{4\alpha}},\quad
    \tau_{w,\mathrm{c}} = \sqrt{\frac{\pi}{4\alpha}},\quad
    \tau_{w,\mathrm{v}} = \frac{1}{\alpha}
\end{equation}
and the length scales of the prior mean covariances are given by
\begin{align}
    \tau_{f,\mathrm{ac}}
    &= \sqrt{\frac{\pi}{2(\alpha + 2\gamma)}}, \\
    \tau_{f,\mathrm{c}}
    &= \sqrt{\frac{2}{\pi(\alpha + 2\gamma)}}\tanh\parens*{
        \sqrt{\frac{\alpha + 2 \gamma}{\alpha}}
    } \overset{\gamma \gg \alpha}{\approx} \sqrt{\frac{\pi}{2(\alpha + 2\gamma)}}, \\
    \tau_{f,\mathrm{r}} &= \frac{1}{\lambda}
\end{align}
where the approximation follows from that $\tanh(x) \approx \frac12 \pi$ for $x \gg 1$.
If we fix $\tau_w$ and $\tau_f$ to given values for all models, we obtain
\begin{equation}
    \alpha\ss{ac}
    = \frac{\pi}{4} \frac{1}{\tau^2_w},\quad
    \alpha\ss{c}
    = \frac{\pi}{4} \frac{1}{\tau^2_w},\quad
    \alpha\ss{r}
    = \frac{1}{\tau_w}.
\end{equation}
and
\begin{equation}
    \gamma\ss{ac,c}
    =\frac{\pi}{4} \frac{1}{\tau_f^2} - \frac12\alpha\ss{ac,c}, \quad
    \lambda\ss{r} = \frac{1}{\tau_f}.
\end{equation}
Intuitively, $\tau_f$ should be set to smallest length scale in the signal, and $\tau_w$ should be set to desired length of the filter, a bit larger than the largest length scale in the signal.

\section{Inference in the GPCM Family}
\label{app:inference}

\subsection{Implementation}
We provide a JAX \citep{jax2018github} based Python implementation of the GPCM, CGPCM, and RGPCM at \href{https://github.com/wesselb/gpcm}{\texttt{github.com/wesselb/gpcm}}.
The package provides an \texttt{sklearn}-style \texttt{model.fit(t, y)}--\texttt{model.predict(t\_new)} interface. 
The following is an example of fitting the RGPCM to data.
Here, \texttt{window} refers to $\tau_w$ and \texttt{scale} refers to $\tau_f$.

{\small\begin{minted}{python}
import numpy as np

from gpcm import GPCM, CGPCM, RGPCM 

model = RGPCM(window=2, scale=1, noise=0.1, t=(0, 10))

# Sample from the prior.
t = np.linspace(0, 10, 100)
k, y = model.sample(t)

# Fit model to the sample.
model.fit(t, y)

# Compute the ELBO.
elbo = model.elbo(t, y)

# Make predictions.
posterior = model.condition(t, y)
mean, var = posterior.predict(t)
\end{minted}
}

A key difficulty in the implementation is that the CGPCM requires evaluation and gradients of bivariate normal CDF; see \cref{subsec:integrals_gpcm_cgpcm}.

\subsection{Variational Approximation}

In what follows, $\vu$ are inducing points (or features) for the filter $h$ and $\vz$ are inducing points (or features) for the excitation signal $x$;
see \citet{Tobar:2015:Learning_Stationary} for the specification of $\vu$ and $\vz$ for the GPCM and \cref{app:inducing_points_cgpcm,app:inducing_points_rgpcm} for the specifications of $\vu$ and $\vz$ for respectively the CGPCM and RGPCM.
We consider the variational approximation 
\begin{equation}
    q_\th(h, x, \vu, \vz)
    = p_\th(h\cond \vu) p_\th(x \cond \vz) q(\vu, \vz)
\end{equation}
where $q(\vz, \vu)$ is a joint variational approximation over the inducing points.
Given some data $\vy$, to optimise $q(\vz, \vu)$ and $\theta$, we optimise the \emph{evidence lower bound} (ELBO):
\begin{equation}
    \F_\th[q(\vu, \vz)]
    = \E_{q}[\log p_\th(\vy\cond f)]
        - \operatorname{KL}[q(\vu, \vz)\divsep p_\th(\vu)p_\th(\vz)].
\end{equation}
As the name suggests, the ELBO indeed forms a lower bound on the marginal likelihood, $\F_\th[q(\vu, \vz)] \le \log p_\th(\vy)$, and optimising $\F_\th[q(\vu, \vz)]$ corresponds to minimising the Kullback--Leibler divergence between our approximation of the posterior $q_\th(h, x, \vu, \vz)$ and the true posterior $p_\th(h, x, \vu, \vz \cond \vy)$ \citep[see, \eg,][]{Wainwright:2008:Graphical_Models_Exponential_Families_and}.
Key to the tractability of the ELBO is the observation that
\begin{equation}
    \E_q[\log p_\theta(\vy \cond f)]
    = \E_{q(\vu, \vz)}[ \E[\log p_\theta(\vy \cond f)\cond \vu, \vz] ]
\end{equation}
where $\E[\log p_\theta(\vy \cond f)\cond \vu, \vz]$ happens to be \emph{conditionally quadratic} in $\vu$ and $\vz$;
see \cref{subsec:cond-exp-loglik}.
Throughout, we drop the dependency on $\th$ and denote $\hat\vu = \mK_\vu^{-1} \vu$ and $\hat\vz = \mK_\vz^{-1} \vz$.
In the remainder of this section, we consider two types of inference schemes: various mean-field schemes and a structured scheme.

\subsection{Mean-Field Inference}
In the mean-field inference scheme, the variational distribution is assumed to factorise
\begin{equation}
    q(\vu,\vz) = q(\vu)q(\vz)
\end{equation}
such that no dependency can occur between the variables. This leads to a simplifying factorisation of the ELBO:
\begin{equation}\label{eq:mf_elbo}
    \F_\th[q(\vu) q(\vz)]
    = \E_{q}[\log p_\th(\vy\cond f)]
        - \operatorname{KL}[q(\vu)\divsep p_\th(\vu)]- \operatorname{KL}[q(\vz)\divsep p_\th(\vz)].
\end{equation}
\cite{Tobar:2015:Learning_Stationary} assume Gaussians 
\begin{equation}
    q(\vu) = \mathcal{N}(\vu;\vmu_\vu,\mSigma_\vu), \quad q(\vz) = \mathcal{N}(\vz;\vmu_\vz,\mSigma_\vz)    
\end{equation}
and optimise \eqref{eq:mf_elbo} with respect the means $(\vmu_\vu,\vmu_\vz)$ and dense covariance matrices $(\mSigma_\vu,\mSigma_\vz)$ using gradient-based optimisation.
As \citeauthor{Tobar:2015:Learning_Stationary} remark and \cref{subsec:optimal_mean-field_qz} shows, the optimal mean-field approximations $q^*(\vu)$ and $q^*(\vz)$ are indeed of a Gaussian form.

\paragraph{Coordinate ascent:}
\Cref{subsec:optimal_mean-field_qz} not only shows that the optimal $q^*(\vu)$ and $q^*(\vz)$ are Gaussian, it also shows that, given $q(\vz)$ (resp.\ $q(\vu)$), the optimal $q^*(\vu)$  (resp.\ $q^*(\vz)$) can be computed explicitly.
This gives rise to a coordinate ascent (CA) scheme which avoids gradient-based optimisation:
\begin{enumerate}
    \item[(1)] Set $q_0(\vu) = p(\vu)$.
    \item[(2)] For $i = 1, \ldots, m$,
    \begin{enumerate}
        \item[(2.a)] set $q_i(\vz)$ to the optimal $q^*(\vz)$ given $q_{i-1}(\vu)$, and
        \item[(2.b)] set $q_i(\vu)$ to the optimal $q^*(\vu)$ given $q_{i}(\vz)$.
    \end{enumerate}
\end{enumerate}
The expression for the optimal $q^*(\vz)$ given $q(\vu)$ is computed in \eqref{eq:optimal_mean-field_qz-var} and \eqref{eq:optimal_mean-field_qz-mean} and the expression for the optimal $q^*(\vu)$ given $q(\vz)$ is exactly analogous.

\paragraph{Collapsed mean-field:}
In practice, the coordinate ascent scheme converges much quicker than gradient-based optimisation of the mean-field ELBO.
A downside of the coordinate ascent scheme, however, is that it only optimises the variational approximation: it cannot optimise the hyperparameters $\th$ and inducing point inputs.
For this, we propose to plug the optimal $q^*(\vz)$ given $q(\vu)$ back into the mean-field ELBO: $\F_\th[q(\vu) q^*(\vu)]$.
This \emph{collapsed} mean-field ELBO does not depend on the variational distribution $q(\vz)$ anymore: it depends on many fewer variational parameters, which greatly accelerates gradient-based optimisation.
The expression for the collapsed mean-field ELBO is computed in \eqref{eq:collapsed_mf_elbo}.

\paragraph{Computational complexity:}
The dominating computation of the three mean-field schemes is the computation of expressions of the form $\mI_{\vu\vz}(t_i) \mX \mI_{\vu\vz}^\T(t_i)$ for all $i = 1,\dots, n$ where $\mX$ is some $n_z \times n_z$ matrix (\eg, see \eqref{eq:dominating-computation}), which takes time $O(n (n_u n_z^2 + n_u^2 n_z))$.
Note that the analogous expression $\mI_{\vu\vz}^\T(t_i) \mY \mI_{\vu\vz}(t_i)$ for some $n_u \times n_u$ matrix $\mY$ is equally expensive.

\subsection{Structured Inference}
In the structured inference scheme, there is no independence assumption nor a Gaussianity assumption on the variational distribution: we consider a general potentially non-Gaussian $q(\vu, \vz)$ which can model arbitrary dependencies between $\vu$ and $\vz$.
Let
\begin{align}\label{eq:guz}
    g(\vu, \vz) &= \exp \E[\log p(\vy \cond f)\cond \vu, \vz], \\
    Z^* &= \int p(\vu) p(\vz) g(\vu, \vz) \isd \vu \isd \vz.
\end{align}
Then the ELBO can be written as
\begin{equation}
    \F_\th[q(\vu, \vz)]
    = \log Z^* - \KL(q(\vu, \vz) \divsep \tfrac1{Z^*} p(\vu) p(\vz) g(\vu, \vz)),
\end{equation}
which means that the optimal $q^*(\vu, \vz)$ is given by $q^*(\vu, \vz) = \tfrac1{Z^*} p(\vu) p(\vz) g(\vu, \vz)$.
As \cref{subsec:cond-exp-loglik} explains, $(\vu, \vz) \mapsto \E[\log p(\vy \cond f)\cond \vu, \vz]$ can be computed, but unfortunately is a \emph{quartic} function.
Therefore, although can we evaluate $g(\vu, \vz)$, we cannot compute $Z^*$ analytically, which means that we can only evaluate $q^*(\vu, \vz)$ up to a normalising constant.
Moreover, plugging $q^*(\vu, \vz)$ back into the ELBO gives
\begin{equation}
    \F_\th[q^*(\vu, \vz)] = \log Z^*,
\end{equation}
which is intractable because $Z^*$ cannot be computed.
In the next paragraphs, we describe how these intractabilities can be navigated to enable inference and learning with the structured approximation.

\paragraph{Inference through Gibbs sampling $q^*(\vu, \vz)$:}
The optimal $q^*(\vu, \vz)$ can be factorised as follows:
\begin{equation}\label{eq:facctorisation}
    q^*(\vu, \vz)
    = q^*(\vu) q^*(\vz \cond \vu) 
\end{equation}
where 
\begin{align}\label{eq:q_cond_optimal}
    q^*(\vu) &\vphantom{\int}\propto p(\vu) Z(\vu), \\
    q^*(\vz \cond \vu) &\vphantom{\int} = \frac{1}{Z(\vu)} p(\vz) g(\vz, \vu), \\
    Z(\vu) &= \int p(\vz) g(\vu, \vz) \isd \vz.
\end{align}
For $q^*(\vu)$, \cref{subsec:optimal_structured_qu} shows that it can only be evaluated up to a normalising constant, unfortunately.
However, crucially, \cref{subsec:optimal_structured_qzu} shows that $q^*(\vz \cond \vu)$ takes the form of a Gaussian with mean and variance depending on $\vu$, which means that, for a given $\vu$, $q^*(\vz \cond \vu)$ can be directly sampled from.
Therefore, although we cannot evaluate $q^*(\vu, \vz)$ nor directly sample from it, we can eventually generate samples from $q^*(\vu, \vz)$ with the following Gibbs sampling scheme:
\begin{enumerate}
    \item[(1)] Draw initial sample $\vu^{(0)} \sim p(\vu)$.
    \item[(2)]
        For $i = 1, \ldots, m$,
        \begin{enumerate}
            \item[(2.a)] sample $\vz^{(i)} \sim q^*(\vz \cond \vu^{(i-1)})$, and
            \item[(2.b)] sample $\vu^{(i)} \sim q^*(\vu \cond \vz^{(i)})$. 
        \end{enumerate}
\end{enumerate}
With this sampling scheme, we are able to perform inference in the model whilst completely avoiding numerical variational optimisation.
The expression for $q^*(\vz \cond \vu)$ is computed in \eqref{eq:optimal_structured_qzu_var} and \eqref{eq:optimal_structured_qzu_mean} and the expression for $q^*(\vu \cond \vz)$ is exactly analogous.

\paragraph{Learning through approximation of $\frac{\sd}{\sd\theta}
    \F_\th[q^*(\vu, \vz)]$:}
If we substitute the factorisation \eqref{eq:facctorisation} back into the ELBO, we get
\begin{equation}
    \F_\th[q^*(\vu, \vz)]
    = \F_\th[q^*(\vz \cond \vu) q^*(\vu)]
    = \E_{q^*(\vu)}[\log Z(\vu)] - \KL(q^*(\vu)\divsep p(\vu)).
\end{equation}
As before, $\F_\th[q^*(\vz \cond \vu) q^*(\vu)]$ is intractable.
However, it turns out that gradients of $\F_\th[q^*(\vz \cond \vu) q^*(\vu)]$ with respect to $\th$ and inducing points inputs can be computed:
\begin{equation}
    \frac{\sd}{\sd\theta}
    \F_\th[q^*(\vu, \vz)]
    =
        \frac{\d}{\d \th} F_\th[q^*(\vu, \vz)]
        + \cancel{\lra*{
            \frac{\delta F_\th}{\delta q(\vu, \vz)}[q^*(\vu, \vz)] ,
            \frac{\d}{\d \th} q^*(\vu, \vz)
        }}
\end{equation}
where $\smash{\frac{\sd}{\sd\theta}}$ denotes the total derivative with respect to $\th$, $\smash{\frac{\d}{\d \th}}$ the partial derivative with respect to $\th$, and $\smash{\frac{\delta }{\delta q(\vu, \vz)}}$ the functional derivative with respect to $q(\vu, \vz)$.
Here the second term cancels because $\smash{\frac{\delta F_\th}{\delta q(\vu, \vz)}[q^*(\vu, \vz)] = 0}$ by optimality of $q^*(\vu, \vz)$.
Therefore, using the above Gibbs sampling scheme to generate samples from $q^*(\vu)$, the following approximation is tractable:
\begin{equation} \label{eq:gradient-approx}
    \frac{\sd}{\sd\theta}
    \F_\th[q^*(\vu, \vz)]
    \approx \frac{\d}{\d\theta} \frac{1}{m} \sum_{i=1}^m \log Z_\th(\vu^{(i)}) p_\th(\vu^{(i)})
    \quad\text{where}\quad
    \vu^{(i)} \overset{\text{i.i.d.}}{\sim} q^*(\vu)
\end{equation}
where we make the dependence of $Z_\th(\vu)$ and $p_\th(\vu)$ on $\th$ explicit.
By iterating the Gibbs sampling scheme and \eqref{eq:gradient-approx}, we are able to perform stochastic gradient-based optimisation of $\th$ and the inducing point inputs.
The expression for $\log Z(\vu) p(\vu)$ is computed in \eqref{eq:gradient-target}.

\paragraph{Evidence approximation through a lower bound on $\F_\th[q^*(\vu, \vz)]$:}
Although we have described how gradients of $\F_\th[q^*(\vu, \vz)]$ can be approximated, some applications may require an approximation of the value of $\F_\th[q^*(\vu, \vz)]$.
To tractably approximate $\F_\th[q^*(\vu, \vz)]$, we propose the following lower bound:
\begin{equation}
    \F_\th[q^*(\vu, \vz)]
    \ge \F_\th[q^*\ss{MF}(\vu) q^*(\vz \cond \vu)]
    = \E_{q^*\ss{MF}(\vu)}[\log Z(\vu)] - \KL(q^*\ss{MF}(\vu)\divsep p(\vu))
\end{equation}
where $q^*\ss{MF}(\vu)$ is the optimal mean-field approximation of $q(\vu)$ obtained with the coordinate ascent procedure.
The expectation can be approximated with a Monte Carlo approximation by sampling from $q^*\ss{MF}(\vu)$.
The expression for $\F_\th[q(\vu) q^*(\vz \cond \vu)]$ for an arbitrary $q(\vu)$ is computed in \eqref{eq:partial_structured_bound}.

\paragraph{Computational complexity:}
Like the mean-field schemes, the structured inference scheme also takes $O(n(n_u n_z^2 + n_u^2 n_z))$ time.
However, for the Gibbs sampling scheme, by paying an upfront cost of $O(n(n_u n_z^2 + n_u^2 n_z))$ once, every Gibbs sample iteration can be performed in $O(n n_u n_z)$ time, which is a dramatic speed-up over $O(n(n_u n_z^2 + n_u^2 n_z))$. 
This speed-up makes it possible to always run the Gibbs sampler until convergence without excessive computational expense.

\subsection{Computations}
In this section, we give detailed derivations of all remaining computations.

\subsubsection{Conditional Expectation of the Likelihood \texorpdfstring{$\E[\log p(\vy \cond f) \cond \vu, \vz]$}{E[log p(y | f) | u, z]}}
\label{subsec:cond-exp-loglik}
To begin with, compute
\begin{equation}
    \E[f(t) \cond \vu, \vz]
    = \int_{-\infty}^t \E[h(t - \tau) \cond \vu] \E[x(\tau) \cond \vz] \isd \tau
    = \hat\vu^\T \int_{-\infty}^t
        k_{\vu}(t - \tau)
        k_{\vz}^\T(\tau)
    \isd \tau\, \hat\vz
    = \hat\vu^\T \mI_{\vu\vz}(t) \hat\vz
\end{equation}
where we define
\begin{equation}
    \mI_{\vu\vz}(t) \coloneqq \int_{-\infty}^t
        k_{\vu}(t - \tau)
        k_{\vz}^\T(\tau)
    \isd \tau.
\end{equation}
Write
\begin{equation}
    \E[h(t) h(t') \cond \vu]
    =
        k_h(t, t')
        -
        k_{\vu}^\T(t) \mK_\vu^{-1} k_{\vu}(t')
        + (k_{\vu}^\T(t) \hat\vu)^2
    =
        k_h(t, t')
        +
        k_{\vu}^\T(t) \mM_\vu k_{\vu}(t')
\end{equation}
where $\mM_\vu = \hat\vu\hat\vu^\T - \mK_\vu^{-1}$.
Then
\begin{align}
    \E[f^2(t) \cond \vu, \vz]
    &= \int_{-\infty}^t \int_{-\infty}^{t'} \E[h(t - \tau) h(t - \tau') \cond \vu] \E[x(\tau) x(\tau') \cond \vz] \isd \tau' \isd \tau  \\
    &= \int_{-\infty}^t \int_{-\infty}^{t'} (
            k_h(t - \tau, t - \tau')
            +
            k_{\vu}^\T(t - \tau) \mM_\vu k_{\vu}(t - \tau')
        ) (
            k_x(\tau, \tau')
            +
            k_{\vz}^\T(\tau) \mM_\vz k_{\vz}(\tau')
        ) \isd \tau' \isd \tau\\
    &= \vphantom{\int} T_1(t) + T_2(t) + T_3(t) + T_4(t)
\end{align}
where
\begin{align}
    T_1(t)
    &\coloneqq \int_{-\infty}^t \int_{-\infty}^{t} k_h(t - \tau, t - \tau') k_x(\tau, \tau') \isd \tau' \isd \tau \\
    T_2(t)
    &\coloneqq \int_{-\infty}^t \int_{-\infty}^{t} k_h(t - \tau, t - \tau') k_{\vz}^\T(\tau) \mM_\vz k_{\vz}(\tau')  \isd \tau' \isd \tau \\
    &= \tr \mM_\vz \int_{-\infty}^t \int_{-\infty}^{t} k_h(t - \tau, t - \tau')   k_{\vz}(\tau') k_{\vz}^\T(\tau) \isd \tau' \isd \tau \\
    T_3(t)
    &\coloneqq
        \int_{-\infty}^t \int_{-\infty}^{t}
            k_{\vu}^\T(t - \tau) \mM_\vu k_{\vu}(t - \tau')
            k_x(\tau, \tau')
        \isd \tau' \isd \tau \\
    &=
        \tr \mM_\vu \int_{-\infty}^t \int_{-\infty}^{t}
            k_{\vu}(t - \tau') k_{\vu}^\T(t - \tau)
            k_x(\tau, \tau')
        \isd \tau' \isd \tau \\
    T_4(t)
    &\coloneqq
        \int_{-\infty}^t \int_{-\infty}^{t}
            k_{\vu}^\T(t - \tau) \mM_\vu k_{\vu}(t - \tau')
            k_{\vz}^\T(\tau) \mM_\vz k_{\vz}(\tau')
        \isd \tau' \isd \tau \\
    &=
        \tr
            \mM_\vz
            \int_{-\infty}^t
                k_{\vz}(\tau)
                k_{\vu}^\T(t - \tau)
            \isd \tau\,
            \mM_\vu
            \int_{-\infty}^{t}
                k_{\vu}(t - \tau')
                k_{\vz}^\T(\tau')
            \isd \tau' \\
    &=
        \tr
            \mM_\vz
            \mI_{\vu\vz}^\T(t)
            \mM_\vu
            \mI_{\vu\vz}(t).
\end{align}
Further define
\begin{align}
    I_{hx}(t)
    &\coloneqq \int_{-\infty}^t \int_{-\infty}^{t} k_h(t - \tau, t - \tau') k_x(\tau, \tau') \isd \tau' \isd \tau, \\
    \mI_{h\vz}(t)
    &\coloneqq \int_{-\infty}^t \int_{-\infty}^{t} k_h(t - \tau, t - \tau')
        k_{\vz}(\tau') k_{\vz}^\T(\tau)
    \isd \tau' \isd \tau, \\
    \mI_{\vu x}(t)
    &\coloneqq \int_{-\infty}^t \int_{-\infty}^t
        k_{\vu}(t - \tau') k_{\vu}^\T(t - \tau) k_x(\tau, \tau') \isd \tau'
    \isd \tau.
\end{align}
Then
\begin{align}
    T_1(t) &= I_{hx}(t),  \\
    T_2(t)
        &= \lra{\mM_\vz, \mI_{h\vz}(t)} \\
        &= \hat \vz^\T \mI_{h\vz}(t) \hat \vz - \lra{\mK_\vz^{-1}, \mI_{h\vz}(t)}, \\
    T_3(t)
        &= \lra{\mM_\vu, \mI_{\vu x}(t)} \\
        &= \hat \vu^\T \mI_{\vu x}(t) \hat \vu - \lra{\mK_{\vu}^{-1}, \mI_{\vu x}(t)}, \\
    T_4(t)
        &= \lra{\mM_\vu, \mI_{\vu\vz}(t)\mM_\vz \mI_{\vu\vz}^\T(t)}, \\
        &= \lra{\hat\vu\hat\vu^\T - \mK_\vu^{-1}, \mI_{\vu\vz}(t)(\hat\vz\hat\vz^\T - \mK_\vz^{-1}) \mI_{\vu\vz}^\T(t)}, \\
        &=
            (\hat \vu^\T \mI_{\vu\vz}(t) \hat \vz)^2
            - \hat \vu^\T \mI_{\vu\vz}(t)\mK_\vz^{-1}\mI_{\vu\vz}^\T(t)\hat \vu  \nonumber \\
        &\qquad
            - \hat \vz^\T \mI_{\vu\vz}^\T(t)\mK_\vu^{-1}\mI_{\vu\vz}(t)\hat \vz
            + \lra{\mK^{-1}_\vu, \mI_{\vu\vz}(t)\mK^{-1}_\vz \mI_{\vu\vz}^\T(t)}.
\end{align}
Aggregate
\begin{align}
    \mA(t) &\coloneqq
        \mI_{\vu x}(t) - \mI_{\vu\vz}(t)\mK_\vz^{-1}\mI_{\vu\vz}^\T(t), \\
    \mB(t) &\coloneqq
        \mI_{h\vz}(t) - \mI_{\vu\vz}^\T(t)\mK_\vu^{-1}\mI_{\vu\vz}(t), \\
    c(t) &\coloneqq
        I_{hx}(t)
        - \lra{\mK_{\vu}^{-1}, \mI_{\vu x}(t)}
        - \lra{\mK_\vz^{-1}, \mI_{h\vz}(t)}
        + \lra{\mK^{-1}_\vu, \mI_{\vu\vz}(t)\mK^{-1}_\vz \mI_{\vu\vz}^\T(t)}. \label{eq:dominating-computation}
\end{align}
Then
\begin{equation}
    \E[f^2(t) \cond \vu, \vz]
    = \hat \vu^\T \mA(t) \hat \vu + \hat \vz^\T \mB(t) \hat \vz + (\hat \vu^\T \mI_{\vu\vz}(t) \hat \vz)^2 + c(t).
\end{equation}
With $\E[f(t)\cond \vu, \vz]$ and $\E[f^2(t)\cond \vu, \vz]$ computed, we can compute $\E[\log p(\vy \cond f) \cond \vu, \vz]$:
\begin{align}
    \E[\log p(\vy \cond f) \cond \vu, \vz]
    &= \sum_{i=1}^n \sbrac*{
        -\frac12\log(2 \pi \sigma^2)
        - \frac1{2 \sigma^2}\parens*{
            \E[f^2(t_i)\cond \vu, \vz] - 2 y_i \E[f(t_i) \cond \vu, \vz]
            + y_i^2
        }
    } \\
    &\;=
    -\frac n2\log(2 \pi \sigma^2)  - \frac1{2\sigma^2}\norm{\vy}^2_2\nonumber \\
    &\qquad - \frac1{2 \sigma^2}\sum_{i=1}^n \sbrac*{
        \hat \vu^\T \mA(t_i) \hat \vu + \hat \vz^\T \mB(t_i) \hat \vz + (\hat \vu^\T \mI_{\vu\vz}(t_i) \hat \vz)^2 + c(t_i)
        - 2 y_i \hat\vu^\T \mI_{\vu\vz}(t_i) \hat\vz
    }. \\
    &\;=
    -\frac n2\log(2 \pi \sigma^2) - \frac1{2\sigma^2}\sbrac*{\norm{\vy}^2_2 + \hat\vu^\T \sum_{i=1}^n \mA(t_i) \hat \vu + \sum_{i=1}^nc(t_i)} \nonumber\\
    &\qquad - \frac1{2 \sigma^2}\sum_{i=1}^n \sbrac*{
        \hat \vz^\T \parens*{
            \mB(t_i)
            + \mI_{\vu\vz}^\T(t_i) \hat \vu \hat \vu^\T \mI_{\vu\vz}(t_i)
        }\hat \vz
        - 2 y_i \hat\vu^\T \mI_{\vu\vz}(t_i) \hat\vz
    }.
\end{align}
Crucially, observe that both $\vu \mapsto \E[\log p(\vy \cond f) \cond \vu, \vz]$ and $\vz \mapsto \E[\log p(\vy \cond f) \cond \vu, \vz]$ are quadratic functions.
We therefore say that $\E[\log p(\vy \cond f) \cond \vu, \vz]$ is \emph{conditionally quadratic} in $\vu$ and $\vz$.
The function $(\vu, \vz) \mapsto \E[\log p(\vy \cond f) \cond \vu, \vz]$, however, is quartic rather than quadratic.
It remains to compute the integrals $I_{hx}(t)$, $\mI_{h\vz}(t)$, $\mI_{\vu x}(t)$, and $\mI_{\vu\vz}(t)$.
We will do this for the GPCM
and CGPCM in \cref{subsec:integrals_gpcm_cgpcm} and for the RGPCM in \cref{subsec:integrals_rgpcm}.

\subsubsection{Optimal Mean-Field \texorpdfstring{$q^*(\vz)$}{q(z)} Given \texorpdfstring{$q(\vu)$}{q(u)} and the Collapsed Mean-Field ELBO}
\label{subsec:optimal_mean-field_qz}
We borrow the result from the next section, which computes the optimal $q^*(\hat \vz \cond \hat \vu)$ and the partial structured ELBO.
To instead compute the optimal $q^*(\vz)$ given $q(\vu)$ in the mean-field scheme and the collapsed mean-field ELBO, simply take an additional expectation over $q(\vu)$.
In particular, let $q(\hat\vu) = \Normal(\vmu_{\hat\vu}, \mSigma_{\hat\vu})$.
Then $q^*(\hat \vz) = \Normal(\vmu_{\hat\vz},\mSigma_{\hat\vz})$ where
\begin{align} 
    \mSigma_{\hat\vz}^{-1}
    &= \mK_\vz + \frac1{\sigma^2} \sbrac*{
        \sum_{i=1}^n \mB(t_i)
        + \sum_{i=1}^n \mI_{\vu\vz}^\T(t_i) (\mSigma_{\hat\vu} + \vmu_{\hat\vu}\vmu_{\hat\vu}^\T) \mI_{\vu\vz}(t_i)
    }, \label{eq:optimal_mean-field_qz-var} \\
    \mSigma_{\hat\vz}^{-1} \vmu_{\hat\vz}
    &= \frac1{\sigma^2} \sum_{i=1}^n y_i \mI_{\vu\vz}^\T(t_i) \vmu_{\hat \vu}. \label{eq:optimal_mean-field_qz-mean} 
\end{align}
Similarly, 
\begin{align}
    &\log \int p(\vz) \exp \E_{q(\vu)}[\E[\log p(\vy \cond f) \cond \vu, \vz]] \isd \vz \nonumber \\
    &\qquad=
    -\frac n2\log(2 \pi \sigma^2)- \frac1{2\sigma^2}\sbrac*{\norm{\vy}^2_2 + \hat\vu^\T \sum_{i=1}^n \mA(t_i) \hat \vu + \sum_{i=1}^n c(t_i)}
        + \frac12 \log \,\abs{\mSigma_{\hat\vz}}
        + \frac12 \log \,\abs{\mK_\vz}
        + \frac12 \vmu_{\hat\vz}^\T \mSigma_{\hat\vz}^{-1} \vmu_{\hat\vz},
\end{align}
so
\begin{align}
    &\F_\theta[q^*(\vz)q(\vu)] \nonumber \\
    &\qquad=
    -\frac n2\log(2 \pi \sigma^2)- \frac1{2\sigma^2}\sbrac*{\norm{\vy}^2_2 + \tr\sbrac*{
        (\mSigma_{\hat\vu} + \vmu_{\hat\vu} \vmu_{\hat\vu}^\T) \sum_{i=1}^n \mA(t_i)
    } + \sum_{i=1}^n c(t_i)} \nonumber \\
    &\qquad\qquad
        + \frac12 \log \,\abs{\mSigma_{\hat\vz}}
        + \frac12 \log \,\abs{\mK_\vz}
        + \frac12 \vmu_{\hat\vz}^\T \mSigma_{\hat\vz}^{-1} \vmu_{\hat\vz}
        - \KL(q(\vu)\divsep p(\vu)). \label{eq:collapsed_mf_elbo}
\end{align}

\subsubsection{Optimal Structured \texorpdfstring{$q^*(\vz \cond \vu)$}{q(z | u)} and the Partially Structured ELBO}
\label{subsec:optimal_structured_qzu}

To begin with, expand
\begin{align}
    &\log p(\vz) + \E[\log p(\vy \cond f) \cond \vu, \vz] \\
    &\quad=
    -\frac n2\log(2 \pi \sigma^2) -\frac12\log\abs{2 \pi \mK_\vz} - \frac12\hat\vz^\T \mK_\vz \hat\vz - \frac1{2\sigma^2}\sbrac*{\norm{\vy}^2_2 + \hat\vu^\T \sum_{i=1}^n \mA(t_i) \hat \vu + \sum_{i=1}^n c(t_i)} \nonumber\\
    &\quad\qquad - \frac1{2 \sigma^2}\sum_{i=1}^n \sbrac*{
        \hat \vz^\T \parens*{
            \mB(t_i)
            + \mI_{\vu\vz}^\T(t_i) \hat \vu \hat \vu^\T \mI_{\vu\vz}(t_i)
        }\hat \vz
        - 2 y_i \hat\vu^\T \mI_{\vu\vz}(t_i) \hat\vz
    }.
\end{align}
Note that
\begin{align}
    \log \Normal(\vx;\vmu, \mSigma)
    &= -\frac12 \log\abs{2 \pi \mSigma} - \frac12 (\vx - \vmu)^\T \mSigma^{-1} (\vx - \vmu) \\
    &= -\frac12 \log\abs{2 \pi \mSigma} - \frac12 (
    \vx^\T \mSigma^{-1} \vx - 2 \vmu^\T \mSigma^{-1} \vx
    ) - \frac12 \vmu^\T \mSigma^{-1} \vmu,
\end{align}
so
\begin{equation}
    - \frac12 (
    \vx^\T \mSigma^{-1} \vx - 2 \vmu^\T \mSigma^{-1} \vx
    )
    =
        \log \Normal(\vx;\vmu, \mSigma)
        + \frac12 \log\abs{2 \pi \mSigma}
        + \frac12 \vmu^\T \mSigma^{-1} \vmu.
\end{equation}
Hence, observe that $q^*(\hat \vz \cond \hat \vu) = \Normal(\vmu_{\hat\vz},\mSigma_{\hat\vz}^{-1})$ where
\begin{align} 
    \mSigma_{\hat\vz}^{-1}
    &= \mK_\vz + \frac1{\sigma^2} \sbrac*{
        \sum_{i=1}^n \mB(t_i)
        + \sum_{i=1}^n \mI_{\vu\vz}^\T(t_i) \hat \vu \hat \vu^\T \mI_{\vu\vz}(t_i)
    }, \label{eq:optimal_structured_qzu_var} \\
    \mSigma_{\hat\vz}^{-1} \vmu_{\hat\vz}
    &= \frac1{\sigma^2} \sum_{i=1}^n y_i \mI_{\vu\vz}^\T(t_i) \hat \vu. \label{eq:optimal_structured_qzu_mean} 
\end{align}
Thus
\begin{align}
    &\log p(\vz) + \E[\log p(\vy \cond f) \cond \vu, \vz] \nonumber \\
    &\quad=
    -\frac n2\log(2 \pi \sigma^2) -\frac12\log\abs{2 \pi \mK_\vz} - \frac1{2\sigma^2}\sbrac*{\norm{\vy}^2_2 + \hat\vu^\T \sum_{i=1}^n \mA(t_i) \hat \vu + \sum_{i=1}^n c(t_i)} \nonumber\\
    &\quad\qquad
        + \log \Normal(\hat \vz; \vmu_{\hat\vz}, \mSigma_{\hat\vz})
        + \frac12 \log \,\abs{2 \pi \mSigma_{\hat\vz}}
        + \frac12 \vmu_{\hat\vz}^\T \mSigma_{\hat\vz}^{-1} \vmu_{\hat\vz} \\
    &\quad=
    -\frac n2\log(2 \pi \sigma^2)- \frac1{2\sigma^2}\sbrac*{\norm{\vy}^2_2 + \hat\vu^\T \sum_{i=1}^n \mA(t_i) \hat \vu + \sum_{i=1}^n c(t_i)} \nonumber\\
    &\quad\qquad
        + \log \Normal(\hat \vz; \vmu_{\hat\vz}, \mSigma_{\hat\vz})
        + \frac12 \log \,\abs{\mSigma_{\hat\vz}}
        - \frac12 \log \,\abs{\mK_\vz}
        + \frac12 \vmu_{\hat\vz}^\T \mSigma_{\hat\vz}^{-1} \vmu_{\hat\vz}.
\end{align}
We now apply $\exp$, integrate over $\sd \vz = \abs{\mK_\vz} \isd \hat \vz$, and apply $\log$ to find
\begin{align}
    &\log \int p(\vz) \exp \E[\log p(\vy \cond f) \cond \vu, \vz] \isd \vz \nonumber \\
    &\qquad= \log Z(\vu) \\
    &\qquad=
    -\frac n2\log(2 \pi \sigma^2)- \frac1{2\sigma^2}\sbrac*{\norm{\vy}^2_2 + \hat\vu^\T \sum_{i=1}^n \mA(t_i) \hat \vu + \sum_{i=1}^n c(t_i)}
        + \frac12 \log \,\abs{\mSigma_{\hat\vz}}
        + \frac12 \log \,\abs{\mK_\vz}
        + \frac12 \vmu_{\hat\vz}^\T \mSigma_{\hat\vz}^{-1} \vmu_{\hat\vz},
\end{align}
so
\begin{align}
    &\F_\th[q(\vu)q^*(\vu \cond \vz)] \nonumber \\
    &\qquad=
        -\frac n2\log(2 \pi \sigma^2)- \frac1{2\sigma^2}\sbrac*{\norm{\vy}^2_2 + \sum_{i=1}^n c(t_i)}- \frac12 \log \,\abs{\mK_\vz} \nonumber \\
    &\qquad\qquad
      + \E_{q(\vu)}\sbrac*{
        -\frac1{2\sigma^2}\hat\vu^\T \sum_{i=1}^n \mA(t_i) \hat \vu
        + \frac12 \log \,\abs{\mSigma_{\hat\vz}(\hat\vu)}
        + \frac12 \vmu_{\hat\vz}^\T(\hat\vu) \mSigma_{\hat\vz}^{-1}(\hat\vu) \vmu_{\hat\vz}(\hat\vu)
    } - \KL(q(\vu), p(\vu)). \label{eq:partial_structured_bound}
\end{align}
Here the expectation can be approximated using Monte Carlo.

\subsubsection{Optimal Structured \texorpdfstring{$q^*(\vu)$}{q(u)} and Gradients for the Structured ELBO}
\label{subsec:optimal_structured_qu}

Expand
\begin{align}
    \log q^*(\vu)
    &\simeq \log p(\vu) Z(\vu) \nonumber \\
    &=
        - \frac n2\log(2 \pi \sigma^2)
        - \frac1{2\sigma^2}\sbrac*{\norm{\vy}^2_2 + \hat\vu^\T \sum_{i=1}^n \mA(t_i) \hat \vu + \sum_{i=1}^n c(t_i)} \nonumber\\
    &\qquad
        - \frac12  \log \,\abs{2\pi \mK_\vu}
        - \frac12 \hat\vu^\T \mK_\vu \hat\vu
        + \frac12 \log \,\abs{\mSigma_{\hat\vz}(\hat\vu)}
        - \frac12 \log \,\abs{\mK_\vz}
        + \frac12 \vmu_{\hat\vz}^\T(\hat\vu) \mSigma_{\hat\vz}^{-1}(\hat\vu) \vmu_{\hat\vz}(\hat\vu)
        \label{eq:gradient-target}
\end{align}
where make the dependence of $\vmu_{\hat\vz}(\hat\vu)$ and $\mSigma_{\hat\vz}(\hat\vu)$ on $\hat\vu$ explicit.
We see that $q^*(\hat\vu)$ can be evaluated up to a normalising constant.

\subsection{Integrals for the GPCM and CGPCM}
\label{subsec:integrals_gpcm_cgpcm}
Rather than explicitly computing the integrals $I_{hx}(t)$, $\mI_{h\vz}(t)$, $\mI_{\vu x}(t)$, and $\mI_{\vu\vz}(t)$, we note that, for the GPCM and CGPCM, all these constitute integrals of exponentiated quadratic forms.
We therefore implement a small computer algebra system (CAS) which is able to symbolically solve the integrals and implement the solutions in JAX.
(For the GPCM, the integrals $I_{hx}(t)$, $\mI_{h\vz}(t)$, $\mI_{\vu x}(t)$, and $\mI_{\vu\vz}(t)$ are defined with different limits, but the CAS is able to handle that.)
For the CGPCM, this requires availability of the bivariate normal CDF.
For this, we use the FORTRAN implementation \texttt{TVPACK} by Alan Genz\footnote{\url{http://www.math.wsu.edu/faculty/genz/software/software.html}}, parallelise the implementation in C++ using OpenMP, and hook the result into JAX's JIT compiler with manually defined gradients.

Below is an example of using the CAS to compute the integral $\mI_{h\vz}(t)$:
{\small
\begin{minted}{python}
import numpy as np

from gpcm.exppoly import ExpPoly, const, var

t = np.linspace(0, 10, 100)
t_z = np.linspace(0, 10, 10)

alpha = 1
alpha_t = 1
gamma = 2
omega = 1
omega_t = 1


def k_h(t1, t2):
    return alpha_t ** 2 * ExpPoly(
        -(const(alpha) * (t1 ** 2 + t2 ** 2) + const(gamma) * (t1 - t2) ** 2),
    )


def k_xs(t1, t2):
    return omega_t * ExpPoly(-const(omega) * (t1 - t2) ** 2)


expq = (
    k_h(var("t") - var("tau1"), var("t") - var("tau2"))
    * k_xs(var("tau1"), var("t_z_1"))
    * k_xs(var("t_z_2"), var("tau2"))
)

I_hz = expq.integrate_box(
    ("tau1", -np.inf, var("t")),
    ("tau2", -np.inf, var("t")),
    t=t[:, None, None],
    t_z_1=t_z[None, :, None],
    t_z_2=t_z[None, None, :],
)
\end{minted}
}

\subsection{Integrals for the RGPCM}
\label{subsec:integrals_rgpcm}

\newcommand{\alphat}{\tilde\alpha}
\newcommand{\gammat}{\tilde\gamma}
\newcommand{\lambdat}{\tilde\lambda}

In what follows, recall that
\begin{equation}
    k_h(t, t') = w(t) w(t') k_g(t, t')
    \quad\text{with}\quad
    w(t) = \alphat e^{-\alpha |t|}, \quad
    k_g(t, t') = \delta(t - t')
\end{equation}
and
\begin{equation}
    k_x(t, t') = e^{-\lambda|t - t'|}.
\end{equation}

\subsubsection{Integral \texorpdfstring{$I_{hx}(t)$}{Ihx}}
Compute
\begin{align}
	I_{hx}(t,t')
	& = \int^{t}_{-\infty}  \int^{t'}_{-\infty} k_h(t-\tau,t'-\tau') k_x(\tau,\tau') \isd\tau'\isd\tau \\
	& = \int^{t}_{-\infty}  \int^{t'}_{-\infty} w(t-\tau)w(t'-\tau')k_g(t-\tau,t'-\tau') k_x(\tau,\tau') \isd\tau'\isd\tau \\
	& = \int_0^\infty  w^2(\tau) k_x(t - \tau,t' - \tau) \isd\tau \\
	& = \frac{\alphat^2}{2\alpha}e^{-\lambda|t - t'|},
\end{align}
so
\begin{equation}
	I_{hx} = I_{hx}(t,t) = \frac{\alphat^2}{2\alpha}.
\end{equation}

\subsubsection{Integral
    \texorpdfstring{$\mI_{h\vz}(t)$}{Ihz}
}\label{apdx::ihz}
Denote the $(m,n)$\textsuperscript{th} element of $\mI_{h\vz}(t)$ by
\begin{align}
	[I_{h\vz}(t)]_{m,n}
	&= \int^{t}_{-\infty} \int^{t}_{-\infty}  w(t-\tau)w(t-\tau') k_g(t-\tau,t-\tau')k_{z_m}(\tau) k_{z_n}(\tau') \isd\tau' \isd\tau \\
	&= \int^{t}_{-\infty} w^2(t-\tau) k_{z_m}(\tau) k_{z_n}(\tau') \isd\tau \\
	&\eqqcolon \vphantom{\int^{t}_{-\infty}} I_{m,n}(t).
\end{align}
For $m=n=0$, we have
\begin{align}
	I_{0,0}(t)
	&= \int_a^{t} w^2(t-\tau)\isd\tau + \int_{-\infty}^a w^2(t-\tau) e^{-2\lambda(a-\tau)}\isd\tau \\
	&= \frac{\alphat^2}{2\alpha}(1-e^{-2\alpha(t-a)}) + \frac{\alphat^2}{2(\alpha+\lambda)}e^{-2\alpha(t-a)} \\
	&= \frac{\alphat^2}{2\alpha} - \frac{\lambda \alphat^2}{2 \alpha (\alpha+\lambda)}e^{-2\alpha(t-a)}.
\end{align}
For $m=0$ and $1\le n\leq M$, we have cosine features:
\begin{align}
	I_{0,n:\cos}(t)
	&= \int_a^{t} w^2(t-\tau) \cos(\omega_n(\tau-a))\isd\tau + \int_{-\infty}^a w^2(t-\tau) e^{-2\lambda(a-\tau)}\isd\tau \\
	&= \frac{\alphat^2}{4\alpha^2+\omega_n^2}\left[  2\alpha\left( \cos(\omega_n(t-a)) - e^{-2\alpha(t-a)}\right) + \omega_n\sin(\omega_n(t-a)) \right]+ \frac{\alphat^2}{2(\alpha+\lambda)}e^{-2\alpha(t-a)} \\
	&= I_{0,0}(t) \vphantom{\int_a^{t}} \quad \text{if} \quad \omega_n=0.
\end{align}
Similarly, for $m,n\leq M$,
\begin{align}
	I_{m:\cos,n:\cos}(t)
	&=
		\int_a^{t}\!\! w^2(t-\tau) \cos(\omega_m(\tau-a))\cos(\omega_n(\tau-a))\isd\tau
		+ \int_{-\infty}^a\!\!\!\! w^2(t-\tau) e^{-2\lambda(a-\tau)}\isd\tau \\
	&=
		\frac{1}{2} \int_a^{t} w^2(t-\tau) \cos(\omega_{m-n}(\tau-a))\isd\tau
		+ \frac{1}{2} \int_a^{t} w^2(t-\tau) \cos(\omega_{m+n}(\tau-a))\isd\tau \nonumber \\
	&\qquad
	    + \int_{-\infty}^a w^2(t-\tau) e^{-2\lambda(a-\tau)}\isd\tau \\
	&= \frac{1}{2} \left(  I_{0,(n-m):\cos}(t) +  I_{0,(n+m):\cos}(t) \right)
\end{align}
where we use that $\omega_m \pm \omega_n = \frac{2\pi}{b-a}(m\pm n) = \omega_{m\pm n}$.
In the following, for $m>M$, recall that we adjust the frequency according to the construction of the variational Fourier features (see \eqref{eq:rgpcm-vffs}):
\begin{equation}
    k_{z_m}(\tau) = \beta_m(t) =  \sin(\omega_{m-M}(\tau-a)).
\end{equation}
For $m=0$ and $n>M$, we have sines:
\begin{align}
	I_{0,n:\sin}(t)
	&= \int_a^{t} w^2(t-\tau)  \sin(\omega_{n-M}(\tau-a))\isd\tau \\
	& \; = \int_0^{\omega_{n-M}(t-a)}  w^2(t-a-\tau/\omega_{n-M})  \sin(\tau) \frac{1}{\omega_{n-M}}\isd\tau \\
	&\;=  \frac{\alphat^2}{\omega_{n-M}}   \int_0^{\omega_{n-M}(t-a)} e^{-2\frac{\alpha}{\omega_{n-M}}(\omega_{n-M}(t-a)-\tau)}\sin(\tau)\isd\tau \\
	&= \frac{\alphat^2}{4\alpha^2+\omega_{n-M}^2}\left[
		\omega_{n-M} (e^{-2\alpha(t-a)} - \cos(\omega_{n-M}(t-a)))
		+ 2\alpha \sin(\omega_{n-M}(t-a))
	\right].
\end{align}
Next, for $m,n>M$,
\begin{align}
	I_{m:\sin,n:\sin}(t)
	&= \int_a^{t} w^2(t-\tau) \sin(\omega_{m-M}(\tau-a))\sin(\omega_{n-M}(\tau-a))\isd\tau \\
	&=
		\frac{1}{2} \int_a^{t} w^2(t-\tau) \cos(\omega_{m-n}(\tau-a))\isd\tau
		- \frac{1}{2} \int_a^{t} w^2(t-\tau) \cos(\omega_{m+n-2M}(\tau-a))\isd\tau  \\
	&= \vphantom{\int_a^{t}} \frac{1}{2}\left( I_{0,(n-m):\cos} - I_{0,(n+m-2M):\cos} \right).
\end{align}
Finally, for $0<m\leq M$ and $n>M$, we have both cosines and sines:
\begin{align}
    I_{m:\cos,n:\sin}(t)
	&= \int_a^{t} w^2(t-\tau) \cos(\omega_m(\tau-a))\sin(\omega_{n-M}(\tau-a))\isd\tau \\
	&=
		\frac{1}{2} \int_a^{t} w^2(t-\tau) \sin(\omega_{m+(n-M)}(\tau-a))\isd\tau
	    + \frac{1}{2} \int_a^{t} w^2(t-\tau) \sin(\omega_{(n-M)-m}(\tau-a))\isd\tau \\
	&= \vphantom{\int_a^{t}}  \frac{1}{2}\left(  I_{0,(n+m):\sin} +  I_{0,(n-m):\sin} \right).
\end{align}

\subsubsection{Integral
    \texorpdfstring{$\mI_{\vu x}(t)$}{Iux}
}\label{apdx::hux}
Denote the $(m, n)$\textsuperscript{th} element of $\mI_{\vu x}(t, t')$ by
\begin{align}
    [\mI_{\vu x}(t, t')]_{m,n}
    &= \int^{t}_{-\infty} \int^{t'}_{-\infty}
 		w(t-\tau)w(t'-\tau')  k_{u_m}(t -\tau)  k_{u_n}(t'-\tau') k_x(\tau,\tau') \isd\tau' \isd\tau   \\
 	&= \int_{0}^{\infty} \int_{0}^{\infty}
 		w(\tau)w(\tau')  k_{u_m}(\tau)k_{u_n}(\tau') k_x(t-\tau,t'-\tau') \isd\tau' \isd\tau  \\
 	&\eqqcolon \vphantom{\int_{0}^{\infty}} I_{m,n}(t, t').
\end{align}
Simplify
\begin{align}
I_{m,n}(t,t')  & =  	\int_{0}^{t_{u,n}} \int_{0}^{t_{u,m}}
 	\alphat^2 \gammat^2e^{-\alpha (\tau +\tau') - \gamma(t_{u,m} -\tau) - \gamma(t_{u,n} - \tau') - \lambda|(\tau - \tau') - (t - t')|}
 	\isd\tau \isd\tau' \\
 	&\quad =
 	\alphat^2 \gammat^2 e^{-\gamma(t_{u,m} + t_{u,n})}
 	\int_{0}^{t_{u,n}} \int_{0}^{t_{u,m}}
 	 e^{(\gamma -\alpha) (\tau +\tau') - \lambda|(\tau - \tau') - (t - t')|}
 	\isd\tau \isd\tau'.
\end{align}
Note that $I_{m,n}(t,t)$ is invariant of $t$.
The integral $I_{m,n}(t,t)$ can be computed with the following propositions.
\begin{proposition}
	Suppose that $a\ge 0$ and $b \ge 0$.
	Then
	\begin{equation}
		\int_0^a \int_0^b e^{c(\tau + \tau') - d|\tau - \tau'|}\isd{\tau}\isd{\tau'}
		=\frac{1}{c^2 - d^2}\left(
			1 + \frac{d}{c}\left(1 - e^{2c(a \land b)}\right)
			- e^{c a - d |a|}
			- e^{c b - d |b|}
			+ e^{c(a + b) - d|a - b|}
		\right).
	\end{equation}
\end{proposition}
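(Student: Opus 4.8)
The plan is to evaluate the double integral directly by splitting the rectangle $[0,a]\times[0,b]$ along the diagonal $\tau=\tau'$, across which $|\tau-\tau'|$ changes sign, so that on each piece the integrand factorises as a product of a function of $\tau$ and a function of $\tau'$ and the iterated integral becomes elementary.

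First I would set $m=a\wedge b$ and note that the portion of the diagonal inside the rectangle runs from $(0,0)$ to $(m,m)$. I would then split the region into $R_1=\{\tau'\le\tau\}$, parametrised by $\tau'\in[0,m]$ and $\tau\in[\tau',a]$, and $R_2=\{\tau\le\tau'\}$, parametrised by $\tau\in[0,m]$ and $\tau'\in[\tau,b]$ (the overlap has measure zero). On $R_1$ the integrand equals $e^{(c-d)\tau}e^{(c+d)\tau'}$ and on $R_2$ it equals $e^{(c+d)\tau}e^{(c-d)\tau'}$; in both cases the inner integral is a single exponential evaluated at its endpoints, and the outer integral then reduces to integrating $e^{2c\tau'}$ and $e^{(c\pm d)\tau'}$ over $[0,m]$. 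The contribution of $R_2$ is the image of that of $R_1$ under $a\leftrightarrow b$, which also leaves the claimed right-hand side invariant, so it suffices to compute one of them carefully.

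Adding the two contributions gives
\[
\int_0^a\!\!\int_0^b e^{c(\tau+\tau')-d|\tau-\tau'|}\isd\tau\isd\tau'
=\frac{(e^{(c-d)a}+e^{(c-d)b})(e^{(c+d)m}-1)}{c^2-d^2}
-\frac{e^{2cm}-1}{c(c-d)}.
\]
To match this with the stated closed form I would use that $\{a,b\}=\{m,\,a\vee b\}$, so the exponents $(c-d)a+(c+d)m$ and $(c-d)b+(c+d)m$ equal, in some order, $2cm$ and $(c+d)m+(c-d)(a\vee b)=c(a+b)-d|a-b|$. Then I would rewrite $\tfrac1{c(c-d)}=\tfrac{c+d}{c(c^2-d^2)}$ to bring everything over the common denominator $c^2-d^2$, and collect terms using $1-\tfrac{c+d}{c}=-\tfrac{d}{c}$; together with $e^{(c-d)a}=e^{ca-d|a|}$ and $e^{(c-d)b}=e^{cb-d|b|}$ (valid since $a,b\ge0$), this yields exactly the asserted expression.

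The main obstacle is purely bookkeeping rather than conceptual: correctly tracking which of $a$ and $b$ realises the minimum when identifying the exponent $c(a+b)-d|a-b|$, and combining the three rational prefactors $\tfrac1{2c}$, $\tfrac1{c+d}$, $\tfrac1{c-d}$ into the single factor $\tfrac1{c^2-d^2}$ without sign errors. One should also record that the identity presupposes $c\neq0$ and $c\neq\pm d$; the excluded values are recovered by taking limits in $c$ or $d$ on both sides, both of which are continuous there.
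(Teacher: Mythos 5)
Your proposal is correct and takes essentially the same route as the paper: both resolve the absolute value by splitting the integration rectangle along the diagonal $\tau=\tau'$ and then evaluating elementary exponential integrals, and your intermediate expression and the final recombination into the stated closed form check out. The only cosmetic difference is that you treat the two triangles symmetrically via $a\wedge b$ and $a\vee b$ (exploiting the $\tau\leftrightarrow\tau'$ symmetry of the integrand), whereas the paper assumes $b\ge a$ without loss of generality.
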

\begin{proof}
	Suppose that $b \ge a$.
	Then $a - b = -|a - b|$ and $a = a \land b$.
	We simply calculate:
	\begin{align}
		\int_0^a \int_0^b e^{c(\tau + \tau') - d|\tau - \tau'|}\isd{\tau}\isd{\tau'}
		&=
			\int_0^a \int_0^{\tau'} e^{c(\tau + \tau') - d(\tau' - \tau)}\isd{\tau}\isd{\tau'}
			+ \int_0^a \int_{\tau'}^{b} e^{c(\tau + \tau') - d(\tau - \tau')}\isd{\tau}\isd{\tau'} \\
		&=
			\int_0^a e^{(c - d) \tau'} \int_0^{\tau'} e^{(c + d)\tau}\isd{\tau}\isd{\tau'}
			+ \int_0^a e^{(c + d) \tau'} \int_{\tau'}^{b} e^{(c - d) \tau}\isd{\tau}\isd{\tau'} \\
		&=
			\frac{1}{c + d}\int_0^a e^{(c - d) \tau'} \left(
				e^{(c + d)\tau'} - 1
			\right)\isd{\tau'} \nonumber \\
		&\phantom{=} \qquad
			+ \frac{1}{c - d}\int_0^a e^{(c + d) \tau'} \left(
				e^{(c - d)b} - e^{(c - d)\tau'}
			\right)\isd{\tau'} \\
		&=
			\frac{1}{c + d}\int_0^a\left(
				e^{2c \tau'}
				- e^{(c - d)\tau'}
			\right)\isd{\tau'} \nonumber \\
		&\phantom{=} \qquad
			+ \frac{1}{c - d}\int_0^a \left(
				e^{(c - d)b}e^{(c + d) \tau'}
				- e^{2c \tau'}
			\right)\isd{\tau'} \\
		&=
			\left[
				\frac{1}{2c (c + d)}\left(
					e^{2ca} - 1
				\right)
				-
				\frac{1}{c^2 - d^2}\left(
					e^{(c - d)a} - 1
				\right)
			\right] \nonumber \\
		&\phantom{=} \qquad + \left[
				\frac{e^{(c - d)b}}{c^2 - d^2}\left(
					e^{(c + d)a} - 1
				\right)
				-
				\frac{1}{2c (c - d)}\left(
					e^{2ca} - 1
				\right)
			\right] \\
		&=
			\frac{1}{2c}
			\left(
				\frac{1}{c + d} - \frac{1}{c - d}
			\right)
			\left(
					e^{2ca} - 1
			\right) \nonumber \\
		&\phantom{=} \qquad + \frac{1}{c^2 - d^2}\left(
			1
			- e^{(c - d)a}
			- e^{(c - d)b}
			+ e^{(c + d)a + (c - d)b}
		\right) \\
		&=
			\frac{d}{c}\frac{1}{c^2 - d^2}
			\left(
					1 - e^{2ca}
			\right)
		+ \frac{1}{c^2 - d^2}\left(
			1
			- e^{c a - d a}
			- e^{c b - d b}
			+ e^{c(a + b) + d(a - b)}
		\right).
	\end{align}
\end{proof}
\begin{proposition}
	Suppose that $ab \le 0$.
	Then
	\begin{equation}
		\int_0^a \int_0^b e^{c(\tau + \tau') - d|\tau - \tau'|}\isd{\tau}\isd{\tau'}
		=\frac{1}{c^2 - d^2}\left(
			1
			- e^{c a - |d| a}
			- e^{c b - |d| b}
			+ e^{c(a + b) - d|a - b|}
		\right).
	\end{equation}
\end{proposition}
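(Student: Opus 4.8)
The plan is to exploit the fact that, in contrast with the previous proposition (where $a,b\ge 0$ forces splitting the square along its diagonal), the hypothesis $ab\le 0$ keeps the sign of $\tau-\tau'$ constant over the whole domain of integration, so the double integral factorises into a product of two one-dimensional integrals.

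First I would reduce to a single configuration of signs. The integrand $e^{c(\tau+\tau')-d|\tau-\tau'|}$ is invariant under swapping $\tau\leftrightarrow\tau'$, and the domain is invariant under the simultaneous swap $(\tau,a)\leftrightarrow(\tau',b)$; hence it suffices to treat the case $a\ge 0\ge b$, with $b\ge 0\ge a$ following by this symmetry and the degenerate cases $a=0$ or $b=0$ being immediate (both sides vanish). Assuming $a\ge 0\ge b$, for $\tau\in[0,a]$ and $\tau'$ running between $0$ and $b$ we have $\tau\ge 0\ge\tau'$, hence $|\tau-\tau'|=\tau-\tau'$ everywhere on the domain, with no case analysis needed.

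Next I would substitute this and separate the variables:
\[
\int_0^a\!\!\int_0^b e^{c(\tau+\tau')-d(\tau-\tau')}\isd\tau\isd\tau'
=\Bigl(\int_0^a e^{(c-d)\tau}\isd\tau\Bigr)\Bigl(\int_0^b e^{(c+d)\tau'}\isd\tau'\Bigr)
=\frac{e^{(c-d)a}-1}{c-d}\cdot\frac{e^{(c+d)b}-1}{c+d}.
\]
Expanding this product over the common denominator $c^2-d^2$ gives $\tfrac{1}{c^2-d^2}\bigl(1-e^{(c-d)a}-e^{(c+d)b}+e^{(c-d)a+(c+d)b}\bigr)$. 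Finally I would rewrite the exponents using the sign information: since $a\ge 0\ge b$ we have $|a|=a$, $|b|=-b$, $|a-b|=a-b$, so $(c-d)a=ca-d|a|$, $(c+d)b=cb-d|b|$, and $(c-d)a+(c+d)b=c(a+b)-d|a-b|$, which reproduces the claimed closed form. As in the previous proposition, the apparent singularity at $c=\pm d$ is removable and the identity is read by continuity there.

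There is no real obstacle here, only bookkeeping: one must correctly handle $\int_0^b$ when $b<0$ (so that it equals $-\int_b^0$) and organise the symmetry reduction carefully; the conceptual content is simply that $ab\le 0$ removes the absolute value globally, after which the integral decouples and the computation is a one-liner.
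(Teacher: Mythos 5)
Your proof is correct. The paper gives no proof of this proposition at all, so there is nothing to compare against line by line; your key observation—that $ab\le 0$ keeps the sign of $\tau-\tau'$ constant over the whole (oriented) rectangle, so the double integral factorises as $\frac{e^{(c-d)a}-1}{c-d}\cdot\frac{e^{(c+d)b}-1}{c+d}$ for $a\ge 0\ge b$—is clearly the intended shortcut, and it contrasts cleanly with the preceding proposition, where $a,b\ge 0$ forces the split along the diagonal. The symmetry reduction, the handling of the oriented integral $\int_0^b$ for $b<0$, the degenerate cases, and the remark on the removable singularity at $c=\pm d$ are all fine. One caveat: after expanding, your expression is $\frac{1}{c^2-d^2}\bigl(1-e^{ca-d|a|}-e^{cb-d|b|}+e^{c(a+b)-d|a-b|}\bigr)$, i.e.\ with $d\,|a|$ and $d\,|b|$ in the exponents, which is the form appearing in the neighbouring propositions of the paper and is indeed the correct identity. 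The statement as printed, however, reads $|d|\,a$ and $|d|\,b$; for $b<0$ and $d>0$ this would give $(c-d)b$ where the correct exponent is $(c+d)b$, and a quick check (e.g.\ $a=1$, $b=-1$, $c=0$, $d=1$) shows the printed form is false. So your assertion that the computation "reproduces the claimed closed form" holds only under the evidently intended reading $e^{ca-d|a|}$, $e^{cb-d|b|}$—this should be flagged explicitly as a typo in the statement rather than matched silently.
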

 We can finally use the symmetry in $c$ to get the result for all $a,b\in \R$:
\newcommand{\sign}{\operatorname{sign}}
\begin{proposition}
	For all $a,b\in \R$,
	\begin{align}
		&\int_0^a \int_0^b e^{c(\tau + \tau') - d|\tau - \tau'|}\isd{\tau}\isd{\tau'} \nonumber \\
		&\qquad=\frac{1}{c^2 - d^2}\bigg(
			\ind(ab \ge 0)\frac{d \sign(a)}{c}\left(1 - e^{2c\sign(a)(|a| \land |b|)}\right)
			+ 1
			- e^{c a - d |a|}
			- e^{c b - d |b|}
			+ e^{c(a + b) - d|a - b|}
		\bigg).
	\end{align}
\end{proposition}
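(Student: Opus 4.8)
The plan is to reduce the general statement to the two cases already treated in the preceding propositions by exploiting a reflection symmetry. Write $J(a,b;c) = \int_0^a\int_0^b e^{c(\tau+\tau') - d|\tau-\tau'|}\isd\tau\isd\tau'$, making the dependence on $c$ explicit (the rate $d$ is fixed, and throughout we assume $c^2\neq d^2$). Under the substitution $\tau\mapsto-\tau$, $\tau'\mapsto-\tau'$ the factor $|\tau-\tau'|$ is unchanged and $c(\tau+\tau')$ becomes $-c(\tau+\tau')$, while the sign changes from $\isd\tau$ and $\isd\tau'$ cancel against the reorientation of the limits; carrying this out carefully gives the identity
\begin{equation}
    J(a,b;c) = J(-a,-b;-c)
\end{equation}
for all $a,b,c$. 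In addition, $J(a,b;c) = J(b,a;c)$ by Fubini and the symmetry of the integrand in its two variables.

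With these identities in hand, I would argue by cases against the claimed formula. For $a,b>0$ one has $\ind(ab\ge0)=1$, $\sign(a)=1$, $|a|=a$, $|b|=b$ and $|a|\wedge|b| = a\wedge b$, so the claimed right-hand side reduces exactly to the formula of the first preceding proposition. For $ab<0$ one has $\ind(ab\ge0)=0$, which removes the first summand, and the remaining terms are precisely the formula of the second preceding proposition. For $a,b<0$ I would apply the reflection identity and then the case $a,b>0$ to the arguments $-a,-b>0$ with rate $-c$, and simplify using $(-c)^2-d^2=c^2-d^2$, $|-a|=|a|$, $(-a)\wedge(-b) = -(a\vee b)$, $\sign(a)=-1$ and $|a|\wedge|b| = \min(-a,-b) = -(a\vee b)$; in particular the term $\tfrac{d}{-c}\bigl(1-e^{2(-c)((-a)\wedge(-b))}\bigr)$ collapses to $-\tfrac{d}{c}\bigl(1-e^{2c(a\vee b)}\bigr) = \tfrac{d\sign(a)}{c}\bigl(1-e^{2c\sign(a)(|a|\wedge|b|)}\bigr)$, while the constant term and the three exponential terms match individually.

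The only genuine computation is this last simplification together with the verification of the reflection identity; everything else is substituting the definitions of $\sign$, $\wedge$ and $|\vardot|$ into the two formulas already established, so I expect the sign bookkeeping in the $a,b<0$ case to be the main (if modest) obstacle. Finally I would dispose of the degenerate cases: $c^2=d^2$ is excluded by hypothesis, and if $a=0$ or $b=0$ then $J(a,b;c)=0$ while, using $\sign(0)=0$, the claimed right-hand side telescopes to $0$ as well, so the three cases are consistent on their overlaps $a=0$ and $b=0$ and the formula holds for all $a,b\in\R$.
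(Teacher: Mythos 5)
Your argument is correct and is essentially the paper's own route: the paper's one-line justification (``use the symmetry in $c$'') is exactly your reflection identity $J(a,b;c)=J(-a,-b;-c)$, used to reduce the case $a,b<0$ to the nonnegative case, with the mixed-sign case delegated to the second proposition, just as you do. One small caveat: in that mixed-sign case the second proposition must be read with exponents $ca-d|a|$ and $cb-d|b|$ rather than its printed $ca-|d|a$ and $cb-|d|b$ (a typo, as the check $c=0$, $d=1$, $a=1$, $b=-1$, where the integral equals $-(1-e^{-1})^2$, confirms); with that reading your case analysis and the boundary checks at $a=0$ or $b=0$ go through.
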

Putting everything together, we have the result
\begin{align*}
	I_{m,n}(t,t)
	&=\frac{\alphat^2 \gammat^2 e^{-\gamma(t_{u,m} + t_{u,n})}}{(\gamma - \alpha)^2 - \lambda^2}
	\bigg(
			\frac{\lambda}{\gamma - \alpha} \left(
				1 - e^{2(\gamma - \alpha)(t_{u,m} \land t_{u,n})}
			\right)
			\\
	&\hphantom{\;=\frac{\alphat^2 \gammat^2 e^{-\gamma(t_{u,m} + t_{u,n})}}{(\gamma - \alpha)^2 - \lambda^2}\bigg(}
			+ 1
			- e^{(\gamma - \alpha - \lambda) t_{u,m}}
			- e^{(\gamma - \alpha - \lambda) t_{u,n}}
			+ e^{(\gamma - \alpha)(t_{u,m} + t_{u,n}) - \lambda|t_{u,m} - t_{u,n}|}
	\bigg).
\end{align*}

\subsubsection{Integral \texorpdfstring{$\mI_{\vu\vz}(t)$}{Iuz}}\label{apdx::iuz}
Denote the $(m,k)$\textsuperscript{th} element of $\mI_{\vu\vz}(t)$ by
\begin{equation}
    [\mI_{\vu\vz}(t)]_{m,k}
    = \int_{0}^{\infty}
			w(\tau)
			k_{u_m}(\tau)
			k_{z_k}(t- \tau)
		\isd{\tau}
	\eqqcolon I_{m,k}(t).
\end{equation}
Simplify
\begin{align}
	I_{m,k}(t)
	&=
		\alphat \gammat
		\int_{0}^{t_{u,m}}
			e^{- \alpha \tau - \gamma (t_{u,m} - \tau)}
			k_{z_k}(t - \tau)
		\isd{\tau} \\
	&=
		\alphat \gammat e^{- \gamma t_{u,m}}
		\int_{0}^{t_{u,m}}
			e^{(\gamma - \alpha) \tau}
			k_{z_k}(t - \tau)
		\isd{\tau} \\
	&=
		\alphat \gammat e^{- \gamma t_{u,m} + (\gamma - \alpha) t}
		\int^{t }_{t - t_{u,m}}
			e^{(-\gamma + \alpha)\tau}
			k_{z_k}(\tau)
		\isd{\tau}.
\end{align}
We analyse the result case by case.
Denote
\begin{equation}
	I(l, u, k)
	= \int_{l}^{u}
		e^{(-\gamma + \alpha)\tau}
		k_{z_k}(\tau)
	\isd{\tau}.
\end{equation}
\paragraph{The case $k=0$, $a \le l \le b$, and $a \le u \le b$:}
\begin{equation}
	I(l, u, 0)
	= \int_l^u
		e^{(-\gamma + \alpha)\tau}
		\isd \tau
	= \frac{1}{-\gamma + \alpha}\left(
		e^{(-\gamma + \alpha)u}
		- e^{(-\gamma + \alpha)l}
	\right).
\end{equation}
\paragraph{The case $0 \le k \le K$ and $l, u \le a$:}\footnote{\label{fn1} We could define $\vt_u$ and  $[a,b]$ such that this case never happens.}
\begin{equation}
	I(l, u, k)
	= \int_l^u
		e^{(-\gamma + \alpha)\tau - \lambda(a - \tau)}
		\isd \tau
	= \frac{e^{- \lambda a}}{-\gamma + \alpha + \lambda}\left(
		e^{(-\gamma + \alpha + \lambda)u}
		- e^{(-\gamma + \alpha + \lambda)l}
	\right).
\end{equation}
\paragraph{The case $0 \le k \le K$ and $l, u \ge b$:}\textsuperscript{\ref{fn1}}
\begin{equation}
	I(l, u, k)
	= \int_l^u
		e^{(-\gamma + \alpha)\tau - \lambda(\tau - b)}
		\isd \tau
	= \frac{e^{\lambda b}}{-\gamma + \alpha - \lambda}\left(
		e^{(-\gamma + \alpha - \lambda)u}
		- e^{(-\gamma + \alpha - \lambda)l}
	\right).
\end{equation}
\paragraph{The case $M < k \le 2 K$ and either $l, u \le a$ or $l, u \ge b$:}\textsuperscript{\ref{fn1}}
\begin{equation}
	I(l, u, k) = 0.
\end{equation}
\paragraph{The case $1 \le k \le M$, $a \le l \le b$, and $a \le u \le b$:}
\begin{align}
	I(l, u, k)
	+ i I(l, u, k + M) 
	&= \int_l^u
		e^{(-\gamma + \alpha)\tau + i \omega_k (\tau - a)}
		\isd \tau \\
	&=
		\frac{e^{- i \omega_k a}}{-\gamma + \alpha + i \omega_k}
		\left(
			e^{(-\gamma + \alpha + i \omega_k)u}
			-  e^{(-\gamma + \alpha + i \omega_k)l}
		\right) \\
	&=
		\frac{-\gamma + \alpha - i \omega_k}{(-\gamma + \alpha)^2 + \omega_k^2}
		\left(
			e^{(-\gamma + \alpha)u + i \omega_k (u - a)}
			-  e^{(-\gamma + \alpha)l + i \omega_k (l - a)}
		\right),
\end{align}
which shows that
\begin{align}
	((-\gamma + \alpha)^2 + \omega_k^2)I(l, u, k)
	&=
		e^{(-\gamma + \alpha) u}
		[
			(- \gamma + \alpha)\cos(\omega_k(u - a))
			+ \omega_k\sin(\omega_k(u - a))
		] \nonumber \\
	&\phantom{=}\qquad
		- e^{(-\gamma + \alpha) l}
		[
			(- \gamma + \alpha)\cos(\omega_k(l - a))
			+ \omega_k\sin(\omega_k(l - a))
		]
\end{align}
and
\begin{align}
	((-\gamma + \alpha)^2 + \omega_k^2)I(l, u, k + M)
	&=
		e^{(-\gamma + \alpha) u}
		[
			(\gamma - \alpha)\sin(\omega_k(u - a))
			+ \omega_k\cos(\omega_k(u - a))
		] \nonumber \\
	&\phantom{=}\qquad
		- e^{(-\gamma + \alpha) l}
		[
			(\gamma - \alpha)\sin(\omega_k(l - a))
			+ \omega_k\cos(\omega_k(l - a))
		].
\end{align}

\end{document}